\documentclass{article} 
\usepackage{iclr2025_conference,times}




\usepackage[utf8]{inputenc} 
\usepackage[T1]{fontenc}    
\usepackage{url}            
\usepackage{booktabs}       
\usepackage{amsfonts}       
\usepackage{nicefrac}       
\usepackage{microtype}      
\usepackage{xcolor}       
\usepackage{xkcdcolors}
\usepackage{times}
\usepackage{epsfig}
\usepackage{graphicx}
\usepackage{placeins}
\usepackage{multirow}
\usepackage[skip=5pt]{caption}
\usepackage{rotating}
\usepackage{cancel}
\usepackage{setspace}
\usepackage{eso-pic}

\usepackage{bm}
\usepackage{bibunits} 

\usepackage{amssymb,amsthm}
\usepackage[hidelinks]{hyperref}
\usepackage{amsmath}
\usepackage{graphicx}
\usepackage{wrapfig,lipsum,booktabs}

\usepackage{epsfig}
\usepackage{tikz}
\usetikzlibrary{spy}
\usepackage{algpseudocode}
\usepackage{algorithm}
\usepackage{mathrsfs}

\usepackage{nicefrac}       
\usepackage{booktabs}       

\usepackage{thmtools,thm-restate}
\usepackage{cleveref}
\usepackage{listings}
\usepackage{lstautogobble}  %
\usepackage{color}          %
\usepackage{zi4}            %
\definecolor{bluekeywords}{rgb}{0.13, 0.13, 1}
\definecolor{greencomments}{rgb}{0, 0.5, 0}
\definecolor{redstrings}{rgb}{0.9, 0, 0}
\definecolor{graynumbers}{rgb}{0.5, 0.5, 0.5}
\lstset{
    autogobble,
    columns=fullflexible,
    showspaces=false,
    showtabs=false,
    breaklines=true,
    showstringspaces=false,
    breakatwhitespace=true,
    escapeinside={(*@}{@*)},
    commentstyle=\color{greencomments},
    keywordstyle=\color{bluekeywords},
    stringstyle=\color{redstrings},
    numberstyle=\color{graynumbers},
    basicstyle=\ttfamily\footnotesize,
    frame=l,
    framesep=12pt,
    xleftmargin=12pt,
    tabsize=4,
    captionpos=b
}

\usepackage{subcaption}        
\usepackage{siunitx}               
\usepackage[export]{adjustbox}
\usepackage{makecell}
\usepackage{url}
\usepackage{tikz, pgfplots}
\usetikzlibrary{positioning}
\usepackage{capt-of}
\usepackage{diagbox}

\def\eqref#1{(\ref{#1})}

\usepackage{colortbl}


\usepackage{mdframed}

\usepackage{amsmath,amsfonts,bm}









\def\eqref#1{equation~\ref{#1}}









\def\1{\bm{1}}






\def\rmI{{\mathbf{I}}}





\newcommand{\Ib}{{\bm I}}

\DeclareMathAlphabet{\mathsfit}{\encodingdefault}{\sfdefault}{m}{sl}
\SetMathAlphabet{\mathsfit}{bold}{\encodingdefault}{\sfdefault}{bx}{n}















\newcommand{\cb}{{\boldsymbol c}}

\newcommand{\xb}{{\boldsymbol x}}

\newcommand{\n}{{\boldsymbol n}}
\newcommand{\x}{{\boldsymbol x}}
\newcommand{\y}{{\boldsymbol y}}
\newcommand{\z}{{\boldsymbol z}}

\newcommand{\epsilonb}{{\boldsymbol \epsilon}}

\newcommand{\Ed}{{\mathbb E}}
\newcommand{\Rd}{{\mathbb R}}

\newcommand{\Ac}{{\mathcal A}}

\newcommand{\Nc}{{\mathcal N}}
\newcommand{\Mc}{{\mathcal M}}



\usetikzlibrary{positioning}
\usepackage{capt-of}
\usepackage{diagbox}

\definecolor{C0}{rgb}{0.121569, 0.466667, 0.705882}
\definecolor{C1}{rgb}{1.000000, 0.498039, 0.054902}
\definecolor{C2}{rgb}{0.172549, 0.627451, 0.172549}
\definecolor{C3}{rgb}{0.839216, 0.152941, 0.156863}
\definecolor{C4}{rgb}{0.580392, 0.403922, 0.741176}
\definecolor{C5}{rgb}{0.549020, 0.337255, 0.294118}
\definecolor{C6}{rgb}{0.890196, 0.466667, 0.760784}
\definecolor{C7}{rgb}{0.498039, 0.498039, 0.498039}
\definecolor{C8}{rgb}{0.737255, 0.741176, 0.133333}
\definecolor{C9}{rgb}{0.090196, 0.745098, 0.811765}
\definecolor{trolleygrey}{rgb}{0.5, 0.5, 0.5}


\definecolor{BrickRed}{rgb}{0.6,0,0}
\definecolor{RoyalBlue}{rgb}{0,0,0.8}
\definecolor{Tdgreen}{rgb}{0,0.4,0.7}
\definecolor{pinegreen}{rgb}{0.0, 0.47, 0.44}
\definecolor{cornellred}{rgb}{0.7, 0.11, 0.11}
\definecolor{cadmiumgreen}{rgb}{0.0, 0.42, 0.24}
\definecolor{spirodiscoball}{rgb}{0.06, 0.75, 0.99}
\definecolor{mylightblue}{rgb}{0.85, 0.90, 0.94}
\definecolor{maroon}{cmyk}{0,0.87,0.68,0.32}

\usepackage{pifont}
\algnewcommand{\LineComment}[1]{\State \(\triangleright\) #1}


\definecolor{cfg}{rgb}{0.906, 0.435, 0.318}
\definecolor{cfgpp}{rgb}{0.165, 0.616, 0.561}
\definecolor{cfgnull}{rgb}{0.208, 0.565, 0.953}

\newcommand{\epsnull}{\textcolor{cfgnull}{\hat\epsilonb_\varnothing}}
\newcommand{\epscond}{{\hat\epsilonb_\cb}}
\newcommand{\epscfg}{\textcolor{cfg}{\hat\epsilonb_\cb^\omega}}
\newcommand{\epscfgpp}{\textcolor{cfgpp}{\hat\epsilonb_\cb^\lambda}}
\newcommand{\epsnullnc}{{\hat\epsilonb_\varnothing}}
\newcommand{\epscfgnc}{{\hat\epsilonb_\cb^\omega}}
\newcommand{\epscfgppnc}{{\hat\epsilonb_\cb^\lambda}}

\newcommand{\tweedienull}{\textcolor{cfgnull}{\hat\x_\varnothing}}
\newcommand{\tweediecond}{{\hat\x_\cb}}
\newcommand{\tweediecfg}{\textcolor{cfg}{\hat\x_\cb^\omega}}
\newcommand{\tweediecfgpp}{\textcolor{cfgpp}{\hat\x_\cb^\lambda}}
\newcommand{\tweediecfgnc}{{\hat\x_\cb^\omega}}
\newcommand{\tweediecfgppnc}{{\hat\x_\cb^\lambda}}
\def\eqref#1{(\ref{#1})}
\usepackage{tikz}

\title{CFG++: Manifold-constrained Classifier Free Guidance for Diffusion Models}


\author{Hyungjin Chung$^*$, Jeongsol Kim$^*$, Geon Yeong Park$^*$, Hyelin Nam$^*$, Jong Chul Ye \\
KAIST\\
$^*$: Equal Contribution\\
\texttt{\{hj.chung, jeongsol, pky3436, hyelin.nam, jong.ye\}@kaist.ac.kr}
}


\iclrfinalcopy 
\begin{document}

\maketitle

\begin{center}
    \centering
    \captionsetup{type=figure}
    \includegraphics[width=\textwidth]{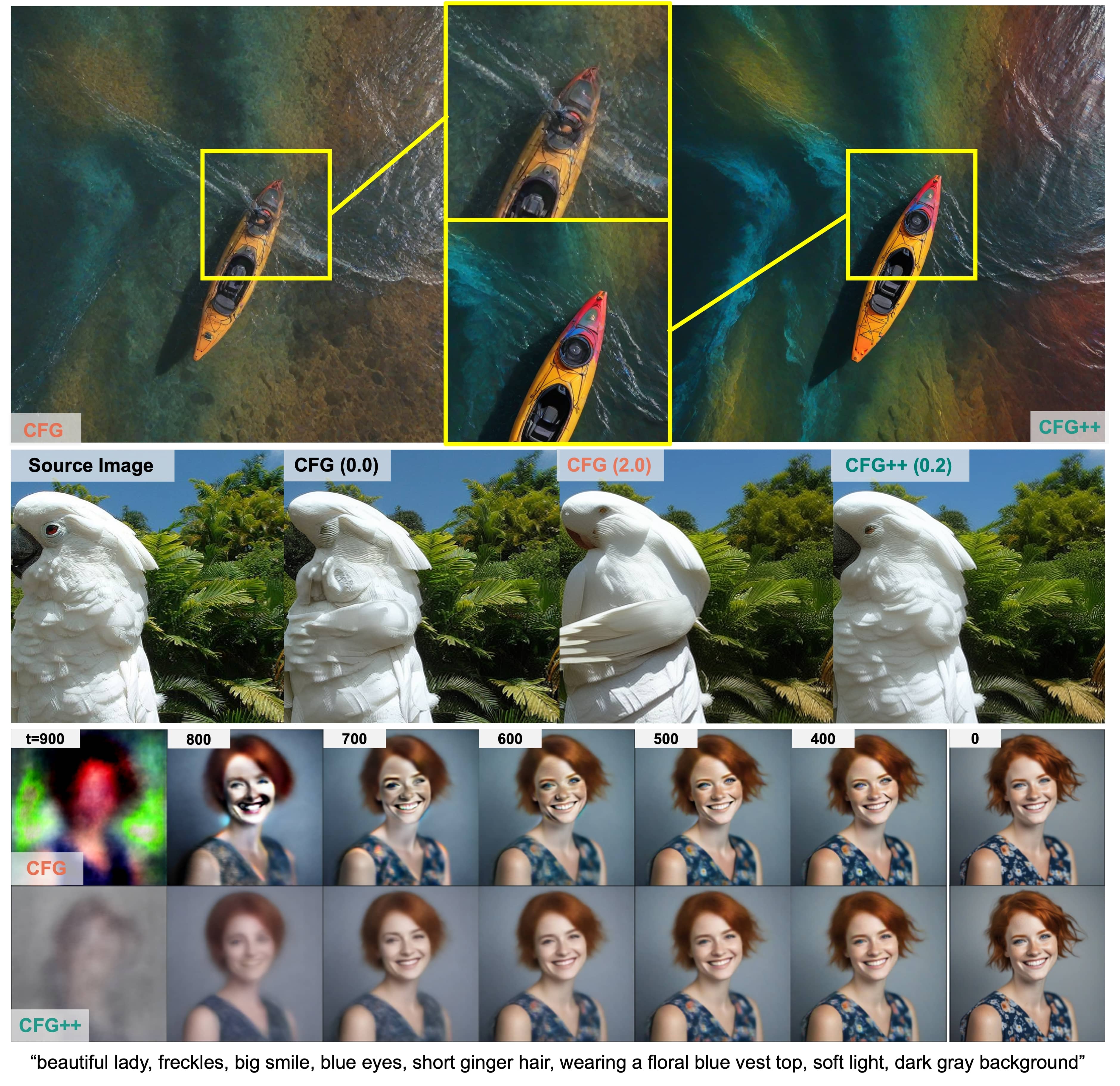}
    \vspace{-0.3cm}
    \captionof{figure}{    
\textbf{(Top)} Comparison of T2I results by SDXL-Lightning for the prompt "kayak in the water, optical color, aerial view, rainbow". The CFG-guided image has significant artifacts, which are reduced in the CFG++ version.
\textbf{(Middle)} DDIM Inversion results under CFG show noticeable artifacts at various CFG scales, which are significantly reduced by CFG++.
\textbf{(Bottom)} The evolution of denoised estimates differs between CFG and CFG++. CFG exhibits sudden shifts and intense color saturation early in reverse diffusion, while CFG++ transitions smoothly from low to high-resolution. 
    }
    \vspace{-0.1cm}
    \label{fig:main}
\end{center}%

\begin{abstract}
Classifier-free guidance (CFG) is a fundamental tool in modern diffusion models for text-guided generation. Although effective, CFG has notable drawbacks. For instance, DDIM with CFG lacks invertibility, complicating image editing; furthermore, high guidance scales, essential for high-quality outputs, frequently result in issues like mode collapse. Contrary to the widespread belief that these are inherent limitations of diffusion models,
this paper reveals that the problems actually stem from the off-manifold phenomenon associated with CFG, rather than the diffusion models themselves.
 More specifically, inspired by the recent advancements of diffusion model-based inverse problem solvers (DIS),  we reformulate text-guidance as an inverse problem with a text-conditioned score matching loss and
develop CFG++, a novel approach that tackles the off-manifold challenges inherent in traditional CFG. CFG++ features a surprisingly simple fix to CFG, yet it offers significant improvements, including 
better sample quality for text-to-image generation, invertibility, smaller guidance scales,  reduced mode collapse, etc. Furthermore, CFG++ enables seamless interpolation between unconditional and conditional sampling at lower guidance scales, consistently outperforming traditional CFG at all scales. 
Moreover, CFG++ can be easily integrated into the high-order diffusion solvers  and naturally extends to distilled diffusion models.
Experimental results confirm that our method significantly enhances performance in 
text-to-image generation, DDIM inversion, editing, and solving inverse problems, suggesting a wide-ranging impact and potential applications in various fields that utilize text guidance. Project Page: \url{https://cfgpp-diffusion.github.io/}.
\end{abstract}

\section{Introduction}
\label{sec:intro}

Classifier-free guidance (CFG)~\citep{ho2021classifierfree} forms the key basis of modern text-guided generation with diffusion models~\citep{dhariwal2021diffusion,rombach2022high}. Nowadays, it is common practice to train a diffusion model with large-scale paired text-image data~\citep{schuhmann2022laion}, so that sampling (i.e. generating) a signal (e.g. image, video) from a diffusion model can either be done unconditionally from $p_\theta(\x|\varnothing) \equiv p_\theta(\x)$, or conditionally from $p_\theta(\x|\cb)$, where $\cb$ is the text conditioning. Once trained, it seems natural that one would acquire samples from the conditional distribution by simply solving the probability-flow ODE or SDE sampling~\citep{song2020score,song2020denoising,karras2022elucidating} with the conditional score function. In practice, however, it is observed that  the conditioning signal is insufficient when used naively. To emphasize the guidance, one  uses the guidance scale $\omega > 1$, where the direction can be defined by the direction from the unconditional score to the conditional score~\citep{ho2021classifierfree}. 

In modern text-to-image (T2I) diffusion models, the guidance scale $\omega$ is  typically set within the range of $[5.0, 30]$,
referred to as the {\em moderately} high range of CFG guidance
~\citep{chen2024pixartalpha,podell2023sdxl}. 
The insufficiency in guidance also holds for classifier guidance~\citep{dhariwal2021diffusion,song2020score} so that a  scale of 10 was used.
While using a high guidance scale yields higher-quality images with better alignment to the condition, it is also prone to mode collapse, reduces sample diversity, and yields an inevitable accumulation of errors during the sampling process. One example is DDIM inversion~\citep{dhariwal2021diffusion}, a pivotal technique for controllable synthesis and editing~\citep{mokady2023null}, where running the inversion process with $\omega > 1.0$ leads to significant compromise in the reconstruction performance~\citep{mokady2023null,wallace2023edict}.
Another extreme example would be score distillation sampling (SDS)~\citep{poole2022dreamfusion}, where the guidance scale in the order of a few hundred is chosen. Using such a high guidance scale leads to better asset quality to some extent, but induces blurry and saturated results. Several research efforts have been made to mitigate this downside by exploring methods where using a smaller guidance scale suffices~\citep{wang2024prolificdreamer,liang2023luciddreamer}. Although recent progress in SDS-type methods has reduced the necessary guidance scale to a range that is similar to those of ancestral samplers, using a moderately large $\omega$ is considered an inevitable choice.

In this work, we aim to give an answer to this conundrum by revisiting the geometric view of diffusion models. In particular, inspired by the recent advances in diffusion-based inverse problem solvers (DIS)~\citep{kadkhodaie2021stochastic,chung2023diffusion,song2023pseudoinverseguided,kim2024dreamsampler,chung2024decomposed},
we reformulate the text guidance as an inverse problem with a text-conditioned score-matching loss and derive a reverse diffusion sampling strategy by utilizing decomposed diffusion sampling (DDS) \citep{chung2024decomposed}.
This results in a surprisingly simple fix of CFG to the sampling process without any computational overhead. The resulting process, which we call CFG++, works with a small guidance scale, typically $\lambda \in [0.0, 1.0]$, that smoothly {\em interpolates} between unconditional and conditional sampling, with $\lambda = 1.0$ having a similar effect as using CFG sampling with $\omega \sim 12.5$ at 50 neural function evaluation (NFE). 
Furthermore, DDIM inversion with CFG++ is invertible up to the discretization error, simplifying image editing.
Comparing CFG++ against CFG shows that we achieve consistently better sample quality for text-to-image (T2I) generation, significantly better DDIM inversion capabilities that lead to enhanced reconstruction and editing, and enabling the incorporation of CFG guidance to diffusion inverse solvers (DIS)~\citep{chung2023diffusion}. While the applications of CFG++ that we show in this work are limited, we believe that our work will have a broad impact that can be applied to all applications that leverage text guidance through the traditional CFG.
\vspace{-0.2cm}

\begin{figure}[t]
\begin{minipage}{.49\textwidth}
    \vspace{-0.7cm}
    \setstretch{1.05}
    \begin{algorithm}[H]
    \small
           \caption{Reverse Diffusion with CFG}
           \label{alg:cfg}
            \begin{algorithmic}[1]
             \Require $\x_T \sim \Nc(0, \rmI_d), 0 \leq \textcolor{cfg}{\omega} \in \mathbb{R}$
             \For{$i=T$ {\bfseries to} $1$}
                 \State{$\epscfgnc(\x_t) = \epsnullnc(\x_t) + \textcolor{cfg}{\omega} [\epscond(\x_t) - \epsnullnc(\x_t)]$}
                 \State{$\tweediecfgnc(\x_t) \gets (\x_t - \sqrt{1-\bar\alpha_t}\epscfgnc(\x_t))/\sqrt{\bar\alpha_t}$}
                 \State{$\x_{t-1} = \sqrt{\bar\alpha_{t-1}} \tweediecfgnc(\x_t) + \sqrt{1-\bar\alpha_{t-1}}\epscfg(\x_t)$}
              \EndFor
              \State {\bfseries return} $\x_0$
            \end{algorithmic}
    \end{algorithm}
\end{minipage}
\begin{minipage}{.49\textwidth}
    \vspace{-0.7cm}
    \begin{algorithm}[H]
           \small
           \caption{Reverse Diffusion with CFG++}
           \label{alg:cfg++}
            \begin{algorithmic}[1]
             \Require $\x_T \sim \Nc(0, \rmI_d), \textcolor{cfgpp}{\lambda} \in [0, 1]$
             \For{$i=T$ {\bfseries to} $1$}
                 \State{$\epscfgppnc(\x_t) = \epsnullnc(\x_t) + \textcolor{cfgpp}{\lambda} [\epscond(\x_t) - \epsnullnc(\x_t)]$}
                 \State{$\tweediecfgppnc(\x_t) \gets (\x_t - \sqrt{1-\bar\alpha_t}\epscfgppnc(\x_t))/\sqrt{\bar\alpha_t}$}
                 \State{$\x_{t-1} = \sqrt{\bar\alpha_{t-1}} \tweediecfgppnc(\x_t) + \sqrt{1-\bar\alpha_{t-1}}\epsnull(\x_t)$}
            \EndFor
            \State {\bfseries return} $\x_0$
            \end{algorithmic}
    \end{algorithm}
\end{minipage}
    \caption{Comparison between reverse diffusion process by CFG and CFG++. CFG++ proposes a simple but surprisingly effective fix: using $\epsnull(\x_t)$ instead of $\epscfg(\x_t)$ in updating $\x_{t-1}$. }
\vspace{-0.5cm}
\end{figure}

\section{Background}
\label{sec:background}

\noindent\textbf{Diffusion models.}
\label{subsec:diffusion}
Diffusion models~\citep{ho2020denoising,song2020score,karras2022elucidating} are generative models designed to learn the reversal of a forward noising process. This process starts with an initial distribution $p_0(\x)$ where $\x \in \Rd^n$, and progresses towards the standard Gaussian distribution $p_T(\x) \approx \mathcal{N}(\bm{0}, \Ib)$, utilizing forward Gaussian perturbation kernels. Sampling from the data distribution can be performed by solving either the reverse stochastic differential equation (SDE) or the equivalent probability-flow ordinary differential equation (PF-ODE)~\citep{song2020score}. 
For example, under the choice $p(\x_t|\x_0) = \mathcal{N}(\x_0, \sigma_t^2\Ib)$, the generative PF-ODE reads
\begin{align}
\label{eq:pfode}
    d\x_t = -\sigma_t\nabla_{\x_t} \log p(\x_t)\,dt = \frac{\x_t - \mathbb{E}[\x_0|\x_t]}{\sigma_t}\,dt
    ,\,\, \quad
    \x_T \sim p_T(\x_T),
\end{align}
where 
Tweedie's formula~\citep{efron2011tweedie} $\Ed[\x_0|\x_t] = \x_t + \sigma_t^2\nabla_{\x_t} \log p(\x_t)$ is applied to achieve the second equality. When aiming for a text-conditional diffusion model that can condition on arbitrary $\cb$, one can extend epsilon matching to a conditional one where the condition is dropped with a certain probability~\citep{ho2021classifierfree} so that null conditioning with $\cb = \varnothing$ is possible. The neural network architecture of $\epsilonb_\theta$ is designed so that the condition $\cb$ can {\em modulate} the output through cross attention~\citep{rombach2022high}.
For simplicity in notation throughout the paper, we define
$\epscond := \epsilonb_\theta(\x_t, \cb)$ and $\epsnull := \epsilonb_\theta(\x_t, \varnothing)$ by dropping $\theta$ and $\x_t$.

Extending the result of Tweedie's formula to the unconditional case under the variance preserving (VP) framework of DDPMs~\citep{ho2020denoising}, we have
\begin{align}
\label{eq:tweedie}
    \Ed[\x_0|\x_t, \varnothing] = \hat\x_\varnothing(\x_t) := (\x_t - \sqrt{1 - \bar\alpha_t} \epsnull(\x_t)) / \sqrt{\bar\alpha_t}.
\end{align}
Leveraging \eqref{eq:tweedie}, it is common to use DDIM sampling~\citep{song2020score} to solve the conditional probability-flow ODE (PF-ODE) of the generative process. Specifically, a single iteration reads
\begin{align}
\label{eq:ddim_tweedie}
    \tweedienull &= (\x_t - \sqrt{1-\bar\alpha_t} \epsnull)/\sqrt{\bar\alpha_t} \\
    \x_{t-1} &= \sqrt{\bar\alpha_{t-1}}\tweedienull + \sqrt{1-\bar\alpha_{t-1}} \epsnull,
\label{eq:ddim_update}
\end{align}
where  $\tweedienull:=  \hat\x_\varnothing(\x_t) =  \Ed[\x_0|\x_t, \varnothing]$ is the denoised signal by Tweedie's formula, and
\eqref{eq:ddim_update} corresponds to the {\em renoising} step.
 This is repeated for $t = T, T-1, \dots, 1$.

For modern diffusion models, it is common to train a diffusion model in the latent space~\citep{rombach2022high} with
the latent variable $\z$. While most of our experiments are performed with latent diffusion models (LDM), as our framework holds both for pixel- and latent-diffusion models, we will simply use the notation $\x$ regardless of the choice.

\noindent\textbf{Classifier free guidance.}
\label{subsec:cfg}
For a conditional diffusion,
\cite{ho2021classifierfree} considered the sharpened posterior distribution $p^\omega(\x|\cb) \propto p(\x)p(\cb|\x)^\omega$. Using Bayes rule for some timestep $t$,
\begin{align}
\label{eq:bayes_sharpen}
    \nabla_\x \log p^\omega(\x_t|\cb) = \nabla_{\x_t} \log p(\x_t) + \omega (\nabla_{\x_t} \log p(\x_t|\cb) - \nabla_{\x_t} \log p(\x_t))
\end{align}
Parametrizing the score function with $\epsilonb_\theta$, as in DDPM~\citep{ho2020denoising}, we have
\begin{align}
    \epscfg(\x_t) := \epsnull(\x_t) + \omega [\epscond(\x_t) - \epsnull(\x_t)]
\label{eq:cfg}
\end{align}
where we introduce a compact notation $\epscfg$ that guides the sampling from the sharpened posterior. When sampling with CFG guidance with DDIM sampling, one replaces $\epsnull$ with $\epscfg$ for both the Tweedie estimate~\eqref{eq:ddim_tweedie} and the subsequent update step~\eqref{eq:ddim_update},  leading to Algorithm~\ref{alg:cfg}.

\noindent\textbf{Diffusion model-based inverse problem solvers.}
\label{sec:dds}
Diffusion model-based inverse problem solvers (DIS) aims to perform posterior sampling from an unconditional diffusion model~\citep{kadkhodaie2021stochastic,chung2023diffusion,song2023pseudoinverseguided,kim2024dreamsampler}.
Specifically, for a given loss function $\ell(\x)$ which often stems from the likelihood, the goal of DIS is to address the optimization problem $\min_{\xb\in \Mc} \ell(\xb)$, where $\Mc$ represents the clean data manifold sampled from the unconditional distribution $p_0(\x)$. It is essential to navigate in a way that minimizes cost while also identifying the correct clean manifold.

\cite{chung2023diffusion} proposed diffusion posterior sampling (DPS), where the updated estimate from the noisy sample $\x_t \in \Mc_t$ is constrained to stay on the same noisy manifold $\Mc_t$. This is achieved by computing the manifold constrained gradient (MCG)~\citep{chung2022improving} on a noisy sample $\x_t\in \Mc_t$ as
$\nabla^{mcg}_{\x_t}\ell(\x_t) := \nabla_{\x_t}\ell (\hat \x_t)$,
where $\hat \x_t$ is the denoised sample in \eqref{eq:tweedie} through Tweedie's formula~\citep{efron2011tweedie}.
The resulting algorithm with DDIM~\citep{song2020score} can be stated as follows:
\begin{equation}
\begin{aligned}
\label{eq:cvp_ddim}
    \x_{t-1} &= \sqrt{\bar\alpha_{t-1}}\left(\tweedienull -  \gamma_t \nabla_{\x_t}\ell (\tweedienull)\right)+ \sqrt{1-\bar\alpha_{t-1}} \epsnull,
\end{aligned}
\end{equation}
where $\gamma_t>0$ denotes the step size. Under the linear manifold assumption \citep{chung2022improving,chung2023diffusion}, this allows precise transition to $\Mc_{t-1}$. To mitigate the computational complexity and the instability of neural network backprop, \cite{chung2024decomposed} shows that \eqref{eq:cvp_ddim} can be equivalently represented as
\begin{equation}
\begin{aligned}
\label{eq:dds_ddim}
    \x_{t-1} &\simeq \sqrt{\bar\alpha_{t-1}}\left(\tweedienull -  \gamma_t \nabla_{\tweedienull}\ell (\tweedienull)\right)+ \sqrt{1-\bar\alpha_{t-1}} \epsnull
\end{aligned}
\end{equation}
under further assumptions on $\Mc$. This method, often referred to 
as the decomposed diffusion sampling (DDS), bypasses the computation of the score Jacobian, similar to~\citet{poole2022dreamfusion}, making it stable and suitable for large-scale medical imaging inverse problems \citep{chung2024decomposed}. In the following, we leverage the insight from DDS to propose an algorithm to improve upon the CFG algorithm.

\begin{figure}[!t]
    \centering
    \includegraphics[width=0.8\linewidth]{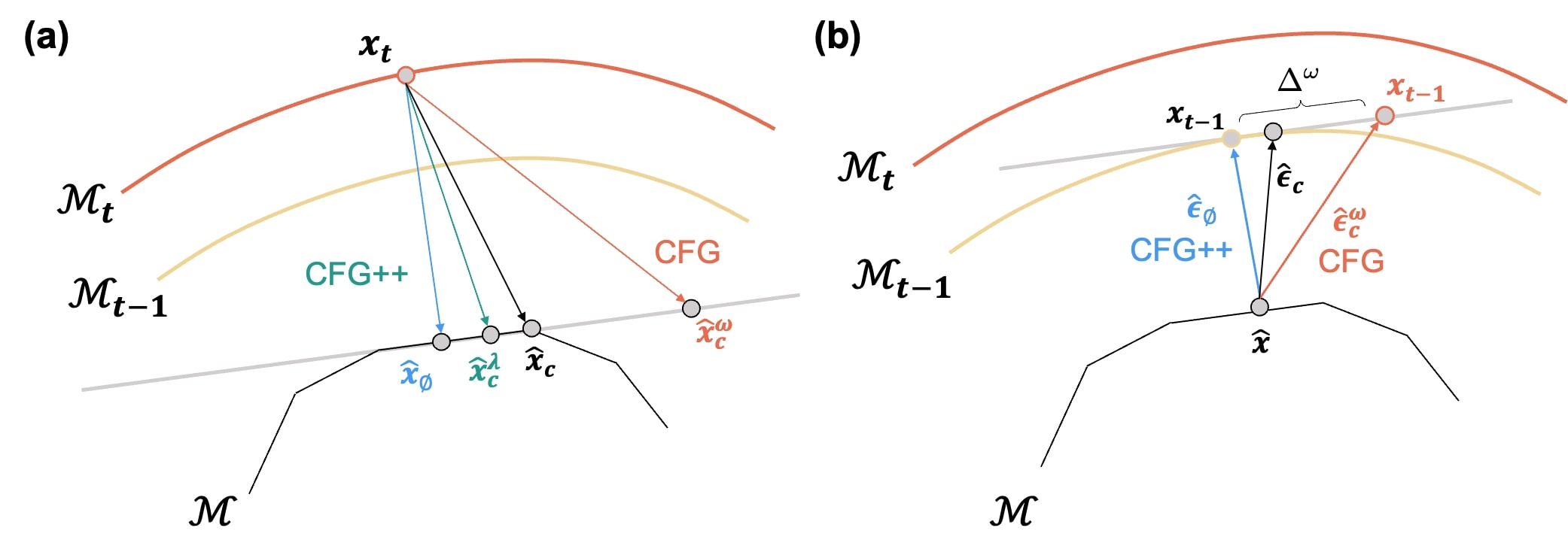}
    \caption{Off-manifold phenomenon of CFG arise from: (a) the typical CFG scale $\omega > 1.0$  which leads to extrapolation and deviation from the piecewise linear data manifold, and (b) CFG's renoising process, which introduces a nonzero offset $\Delta^\omega$ from the correct manifold. CFG++ effectively mitigates all these artifacts.
    }
    \label{fig:manifold}
    \vspace{-0.5cm}
\end{figure}

\section{CFG++ : Manifold-Constrained CFG}
\label{sec:main}

\subsection{Derivation of CFG++}
Instead of uncritically adopting the sharpened posterior distribution 
$p^\omega(\x|\cb) \propto p(\x)p(\cb|\x)^\omega$ as introduced by \cite{ho2021classifierfree}, we adopt a fundamentally different strategy by reformulating text-guidance as an optimization problem. Specifically, our focus is on identifying a loss function 
$\ell(\x)$ in \eqref{eq:dds_ddim} such that, when minimized under the condition set by the text, enables the reverse diffusion process to generate samples that increasingly satisfy the text condition progressively.

One of the most significant contributions of this paper is to reveal that the text-conditioned score matching loss or alternatively, score distillation sampling (SDS) loss~\citep{poole2022dreamfusion} is ideally suited for our purpose. Specifically, we are interested in solving the following inverse problem through diffusion models:
\begin{align}
\label{eq:cfgplus}
    \min_{\xb\in \Mc}\ell_{sds}(x), \quad   \ell_{sds}(\x):= \|\epsilonb_\theta(\sqrt{\bar\alpha_{t}}\x + \sqrt{1-\bar\alpha_{t}} \epsilonb, \cb) - \epsilonb\|_2^2
\end{align}
This implies that our goal is to identify solutions on the clean manifold 
$\Mc$ that optimally aligns with the text condition $\cb$. 

To avoid the Jacobian computation, in this paper, we attempt to solve \eqref{eq:cfgplus} through DDS in \eqref{eq:dds_ddim}.
The resulting sampling process from reverse diffusion is then given by
\begin{eqnarray}
    \x_{t-1} &=& \sqrt{\bar\alpha_{t-1}}\left(\tweedienull -  \gamma_t \nabla_{\tweedienull}\ell_{sds} (\tweedienull)\right)+ \sqrt{1-\bar\alpha_{t-1}} \epsnull.
\end{eqnarray}
By using $\x_t = \sqrt{\bar\alpha_t}\x + \sqrt{1-\bar\alpha_t}\epsilonb$ from the clean image $\x \in \Mc$, we can equivalently write the loss as
$\ell_{sds}(\x) = \frac{\bar\alpha_t}{1-\bar\alpha_t}\| \x - \tweediecond \|^2$, which leads to
\begin{eqnarray}
\label{eq:cfgp1}
    \x_{t-1}     = \sqrt{\bar\alpha_{t-1}}\left(\tweedienull + \lambda (\tweediecond-\tweedienull)\right)+ \sqrt{1-\bar\alpha_{t-1}} \epsnull
\end{eqnarray}
where  $\lambda := \frac{2{\bar\alpha_t}}{{1-\bar\alpha_t}}\gamma_t$.
Using the CFG notation $\epscfgppnc(\x_t) := \epsnullnc(\x_t) + {\lambda} [\epscond(\x_t) - \epsnullnc(\x_t)]$ and $\tweedienull + \lambda (\tweediecond-\tweedienull) =  (\x_t - \sqrt{1-\bar\alpha_t}\epscfgppnc(\x_t))/\sqrt{\bar\alpha_t}$, \eqref{eq:cfgp1} can be equivalently represented as
\begin{align}\label{eq:cfgp2tweedie}
\tweediecfgppnc(\x_t)&= (\x_t - \sqrt{1-\bar\alpha_t}\epscfgppnc(\x_t))/\sqrt{\bar\alpha_t}\\
    \x_{t-1}     &= \sqrt{\bar\alpha_{t-1}}\tweediecfgppnc(\x_t)+ \sqrt{1-\bar\alpha_{t-1}} \epsnull(\x_t) 
    \label{eq:cfgp2ddim}
\end{align}
which is summarized in Algorithm~\ref{alg:cfg++}.
By examining Algorithm~\ref{alg:cfg} and Algorithm~\ref{alg:cfg++}, we observe that CFG and CFG++ are mostly the same, with a crucial difference in the renoising process. This surprisingly simple fix of utilizing the unconditional noise $\epsnull(\x_t)$ instead of $\epscfg(\x_t)$ leads to a smoother trajectory of generation, (Fig.~\ref{fig:main} bottom) and generation with superior quality (Fig.~\ref{fig:main} top).

Although we focus our construction of the solver on DDIM for simplicity, note that DDIM is just one way of reverse sampling. There are other widely-used solvers such as Karras Euler and its variants~\citep{karras2022elucidating}, DPM-solver~\citep{lu2022dpm,lu2022dpmpp}, their ancestral variants\footnote{\url{https://github.com/crowsonkb/k-diffusion}}, etc. For most widely used solvers up to the second order, a single-step update of solving the unconditional PF-ODE can be represented as
\begin{align}
\label{eq:general_sampling_iteration_null}
    \x_i = {\tweedienull(\x_{i-1})} + a_i \tweedienull(\x_{i-1}) + b_i \tweedienull(\x_{i-2}) + c_i\x_{i-1} + d_i\epsilonb,\,\epsilonb \sim \Nc(0, \Ib),
\end{align}
with $d_i \neq 0$ when one uses an ancestral sampler. Note that the first right-hand side term in \eqref{eq:general_sampling_iteration_null} corresponds to the denosing, whereas the rest terms describes the higher-order
 corrected version of the {\em renoising} process.
As the goal of CFG++ is to optimize the denoising process under the text-guidance while keeping the renoising components equivalent to unconditional sampling, applying CFG++ to the general iteration in \eqref{eq:general_sampling_iteration_null} simply leads to
\begin{align}
\label{eq:general_sampling_iteration_cfgpp}
    \x_i = \tweediecfgpp(\x_{i-1}) + a_i \tweedienull(\x_{i-1}) + b_i \tweedienull(\x_{i-2}) + c_i\x_{i-1} + d_i\epsilonb,\,\epsilonb \sim \Nc(0, \Ib).
\end{align}
Moreover, CFG++ naturally extends to distilled diffusion models, such as SDXL-turbo~\citep{sauer2023adversarial} and SDXL-lightning~\citep{lin2024sdxl}, where akin to \eqref{eq:general_sampling_iteration_cfgpp}, we use the conditional denoised estimate, but for the rest of the noise components in the Euler solver, we use the unconditional estimate. For details in how to apply CFG++ to various solvers, see Appendix~\ref{app:higher_order_solvers}.

\subsection{Geometry of CFG++}
\label{subsec: geometry}

\noindent\textbf{Mitigating off-manifold phenomenon.}
In Fig.~\ref{fig:main} (bottom), 
we illustrate the evolution of the posterior mean through Tweedie's formula during the reverse diffusion process. Notably, in the early phases of reverse diffusion sampling under CFG, there is a sudden shift in the image and intense color saturation.
Conversely, CFG++ is free from the undesirable off-manifold phenomenon. In the following, we investigate why this is the case.

Note that the denoised estimate of CFG and CFG++ at time $t$ can be equivalently represented as
\begin{align}
\tweediecfgpp(\x_t)=(1-\lambda) \tweedienull(\x_t) + \lambda \tweediecond(\x_t), \quad
\tweediecfg(\x_t)=(1-\omega) \tweedienull(\x_t) + \omega \tweediecond(\x_t). 
\end{align}
$\lambda,\omega \in [0, 1]$ facilitates an {\em interpolation}. However, with $\omega > 1.0$, CFG {\em \textcolor{cfg}{extrapolates}} beyond the unconditional and conditional estimates. 
Consequently, under the assumption that the clean manifold be piecewise linear~\citep{chung2024decomposed}, the conditional posterior mean estimates from CFG obtained with a guidance scale outside the range of 
$[0, 1]$, can easily extend beyond the piecewise linear manifold. This may lead to the estimates potentially ``falling off'' the data manifold, as depicted by an \textcolor{cfg}{orange} arrow pointing downwards in Fig.~\ref{fig:manifold}(a). 
Thus,  we select $\lambda \in [0, 1]$ as the guidance scale for CFG++ to ensure it remains an {\em \textcolor{cfgpp}{interpolation}} between the unconditional and conditional estimate, thus preventing it from `falling off' the clean data manifold.
An additional source of the off-manifold phenomenon in CFG occurs during the transition from the clean manifold $\Mc$ to the subsequent noisy manifold $\Mc_{t-1}$, due to a similar off-manifold phenomenon from the large guidance scale (i.e. extrapolation), illustrated in Fig.~\ref{fig:manifold} (b).

\begin{figure}[!t]
    \centering
    \includegraphics[width=0.75\linewidth]{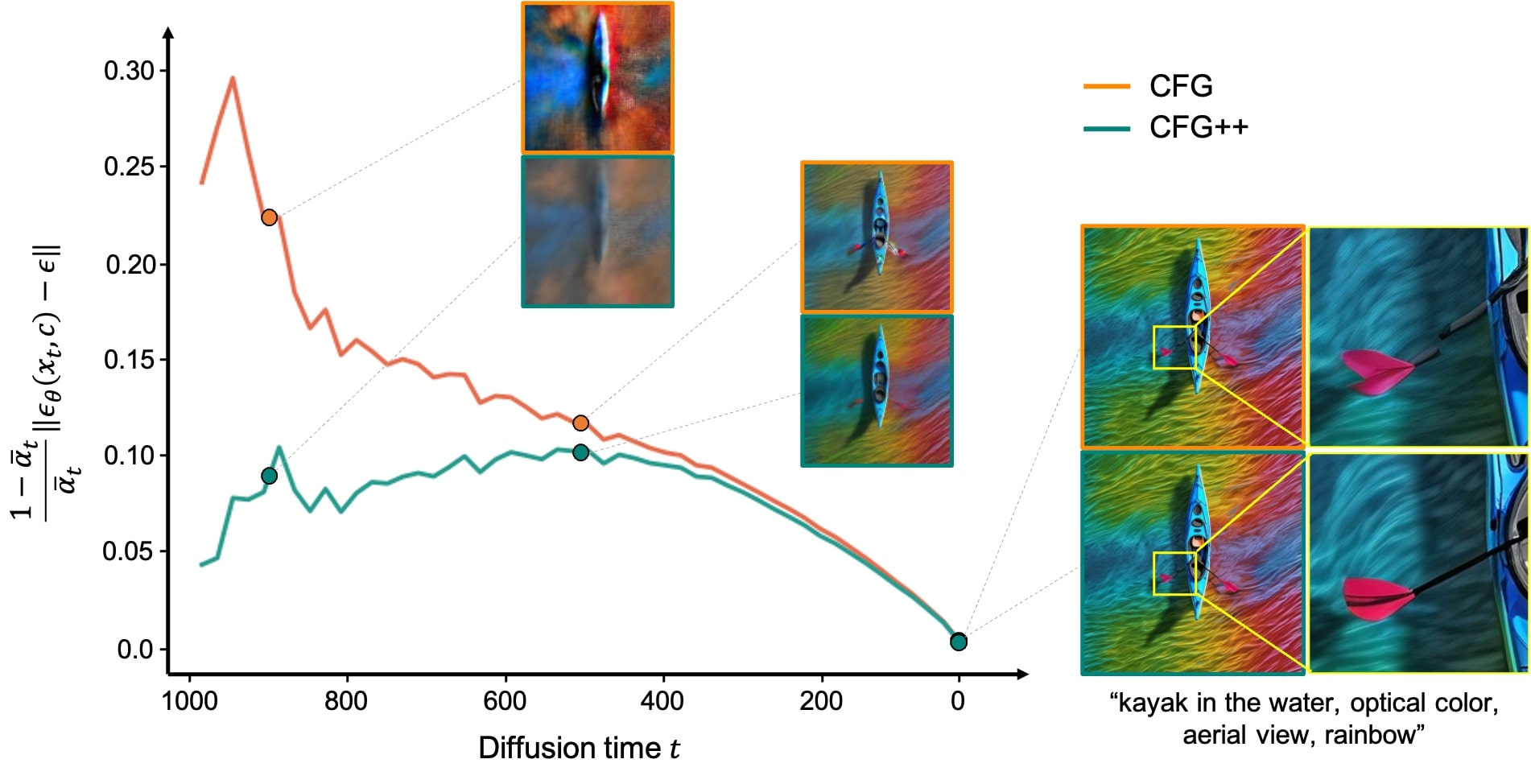}
    \caption{Text-conditioned score matching loss throughout the reverse diffusion sampling for both CFG and CFG++  in SDXL. Avg. loss computed with 55 prompts from~\citep{chen2024pixartalpha}.}
    \label{fig:align_plot}
    \vspace{-0.5cm}
\end{figure}

\noindent\textbf{Text-Image alignment.}
In Fig.~\ref{fig:main} (top), and Fig.~\ref{fig:t2i_sdxl}, we observe enhanced text-to-image alignment achieved with CFG++. This enhanced alignment capability is a natural consequence of CFG++, which directly minimizes the text-conditioned score-matching loss as shown in \eqref{eq:cfgplus}. In contrast, CFG indirectly seeks text alignment through the sharpened posterior distribution   
$p^\omega(\x|\cb) \propto p(\x)p(\cb|\x)^\omega$. Therefore, CFG++ inherently outperforms CFG in terms of text alignment due to its fundamental design principle. 
For example, Fig.~\ref{fig:align_plot} displays the normalized text-conditioned score matching loss, represented as $(1-\bar\alpha_t)\|\epsilonb -\epsilonb_\theta(\x_t, \cb) \|^2/\bar\alpha_t = \| \x - \tweediecond \|^2$, throughout the reverse diffusion sampling process for both CFG and CFG++. The loss plot associated with CFG shows fluctuations and maintains a noticeable gap compared to CFG++ even after the completion of the reverse diffusion process.  In fact, the fluctuation associated
with CFG is also related to the off-manifold issue, and from Fig.~\ref{fig:align_plot}  we can easily see that
the off-manifold phenomenon is more dominant at early stage of reverse diffusion sampling.
Conversely, the loss trajectory for CFG++ demonstrates a much smoother variation, particularly during the early stages of reverse diffusion.

\noindent\textbf{DDIM inversion.}
As discussed in \citep{song2020denoising}, the denoising process for unconditional DDIM is approximately invertible, meaning that 
$\x_t$
  can generally be recovered from 
$\x_{t-1}$
Specifically, from \eqref{eq:ddim_tweedie} and \eqref{eq:ddim_update}, we have the following approximate
inversion formula for unconditional DDIM:
\begin{align}
    \tweedienull(\x_t) &= ({\x_{t-1} - \sqrt{1-\bar\alpha_{t-1}} \epsnull(\x_t)})/{\sqrt{\bar\alpha_{t-1}}} 
    \simeq
    ({\x_{t-1} - \sqrt{1-\bar\alpha_{t-1}} \epsnull(\x_{t-1})})/{\sqrt{\bar\alpha_{t-1}}} \label{eq:ddim_tweedi_u} \\
    \x_{t} & = \sqrt{\bar\alpha_{t}}\tweedienull(\x_t) + \sqrt{1-\bar\alpha_{t}} \epsnull(\x_t)
     \simeq
    \sqrt{\bar\alpha_{t}}\tweedienull(\x_t) + \sqrt{1-\bar\alpha_{t}} \epsnull(\x_{t-1})
    \label{eq:ddim_update_u}
\end{align}
where the approximation arises from $ \epsnull(\x_{t}) \simeq \epsnull(\x_{t-1})$.
A similar inversion procedure has been employed for conditional DDIM inversion under CFG By assuming $\epscfg(\x_t) \simeq \epscfg(\x_{t-1})$, and replacing $\epsnull$ by $\epscfg$ as detailed in
Algorithm~\ref{alg:inversion_cfg}.

On the other hand, by examining \eqref{eq:cfgp2tweedie} and \eqref{eq:cfgp2ddim},
we can  obtain the following approximate DDIM inversion formula under CFG++:
\begin{align}
\label{eq:ddim_tweedie_ipp0}
   \tweediecfgppnc(\x_t)     &\simeq
    ({\x_{t-1} - \sqrt{1-\bar\alpha_{t-1}} \epsnull(\x_{t-1})})/{\sqrt{\bar\alpha_{t-1}}}  \\
    \x_{t} & \simeq
    \sqrt{\bar\alpha_{t}} \tweediecfgppnc(\x_t) + \sqrt{1-\bar\alpha_{t}} \epscfgppnc(\x_{t-1})
\label{eq:ddim_update_ipp}
\end{align}
where we apply the approximation $  \epscfgppnc(\x_{t})\simeq \epscfgppnc(\x_{t-1})$ alongside the
usual assumption $\epsnull(\x_{t}) \simeq \epsnull(\x_{t-1})$.
This formulation underpins the CFG++ guided DDIM inversion algorithm, presented in Algorithm~\ref{alg:inversion_cfg++}.

In practice, for small time step size the error $\epsnull(\x_{t}) \simeq \epsnull(\x_{t-1})$ is relatively small, so the unconditional DDIM inversion by \eqref{eq:ddim_tweedi_u} and \eqref{eq:ddim_update_u}   lead to relatively insignificant errors.
Unfortunately, the corresponding inversion from conditional diffusion under CFG is quite distorted as noted in \citep{mokady2023null,wallace2023edict}.
In fact, this distortion is originated from the inaccuracy of the approximation
$\epscfg(\x_t) \simeq \epscfg(\x_{t-1})$. More specifically, 
even when $\epsnull(\x_{t}) \simeq \epsnull(\x_{t-1})$,
the approximation error from the CFG is given by
\begin{align}
    \boldsymbol{ \varepsilon}_{cfg} &:= \epscfg(\x_{t})-\epscfg(\x_{t-1}) = (\epsnull(\x_{t}) - \epsnull(\x_{t-1})) 
    + \omega \big(\delta\epscond(\x_t) -  \delta\epscond(\x_{t-1})  \big) \notag\\
    &\simeq \omega \big(\delta\epscond(\x_t) -  \delta\epscond(\x_{t-1})  \big),
\end{align}
where $\delta\epscond(\x_t) := \epscond(\x_t) - \epsnull(\x_t)$ denotes the directional component from 
unconditional to the conditional diffusion.
Since the directional component by the text guidance  is not negligible, 
this error becomes significant for high guidance scale $\omega$.
Accordingly, the guidance scale must
be heavily downweighted in order for inversions on real world images to be stable, thus limiting the strength of edits.
To mitigate this issue, the authors in  \citep{mokady2023null,wallace2023edict} developed
null text optimization and coupled transform techniques, respectively.

On the other hand,  under the usual DDIM assumption $ \epsnull(\x_{t}) \simeq \epsnull(\x_{t-1})$, the approximation error of CFG++ mainly arises from \eqref{eq:ddim_update_ipp}, which is smaller than that of CFG since we have
\begin{align}
\|\boldsymbol{ \varepsilon}_{cfg++} \|= \lambda \|\delta\epscond(\x_{t}) - \delta\epscond(\x_{t-1})\| < \|\boldsymbol{ \varepsilon}_{cfg} \|
\end{align}
thanks to $\lambda<\omega$. 
Therefore, CFG++  significantly improves the DDIM inversion as shown Fig.~\ref{fig:main} (middle) for representative results and Sec.~\ref{subsec: inversion} for further discussions.

\begin{table}[t]
\centering
\resizebox{1.0\textwidth}{!}{
\begin{tabular}{lcccccccccc}
\toprule
{} & \multicolumn{2}{c}{$\omega = 2.0, \lambda=0.2$} & \multicolumn{2}{c}{$\omega = 5.0, \lambda=0.4$} & \multicolumn{2}{c}{$\omega = 7.5, \lambda=0.6$} &
\multicolumn{2}{c}{$\omega = 9.0, \lambda=0.8$} & \multicolumn{2}{c}{$\omega = 12.5, \lambda=1.0$}\\
\cmidrule(lr){2-3}
\cmidrule(lr){4-5}
\cmidrule(lr){6-7}
\cmidrule(lr){8-9}
\cmidrule(lr){10-11}
{\textbf{Method}} & {FID $\downarrow$} & {CLIP $\uparrow$} & {FID $\downarrow$} & {CLIP $\downarrow$} & {FID $\downarrow$} & {CLIP $\uparrow$} & {FID $\downarrow$} & {CLIP $\uparrow$} & {FID $\downarrow$} & {CLIP $\uparrow$}\\
\midrule
\thead{CFG~\citep{ho2021classifierfree}} & 13.84 & 0.298 & 15.08 & 0.310 & 17.71 & 0.312 & 20.01 & 0.312 & 21.23 & 0.313 \\
\thead{CFG++ \textcolor{pinegreen}{(ours)}} & \textbf{12.75} & \textbf{0.303} & \textbf{14.95} & 0.310 & \textbf{17.47} & 0.312 & \textbf{19.34} & \textbf{0.313} & \textbf{20.88} & 0.313\\
\bottomrule
\end{tabular}
}
\caption{
Quantitative evaluation of 50NFE DDIM T2I with SD v1.5 on COCO 10k
}
\label{tab:results_t2i}
\vspace{-0.5cm}
\end{table}

\section{Experimental Results}
\vspace{-0.3cm}
In this section, we design experiments to show the limitations of CFG and how CFG++ can effectively mitigate these downsides. The main experiments were conducted with SD v1.5 or SDXL with 50 NFE DDIM sampling. In this regime, we searched for the matching guidance values of $\omega$ and $\lambda$ for a fair comparison. We fix $\lambda = 0.2, 0.4, 0.6, 0.8, 1.0$ and find the $\omega$ values that produce the images that are of closest proximity in terms of LPIPS distance given the same seed. We found that the corresponding values were $\omega = 2.0, 5.0, 7.5, 9.0, 12.5$, respectively.  Some of the experiments were also conduced with distilled model such as
SDXL-{turbo, lightning}.

\subsection{Text-to-Image Generation}

Using the corresponding scales for $\omega$ and $\lambda$, we directly compare the performance of the T2I task using SD v1.5 and SDXL. In Tab.~\ref{tab:results_t2i}, we report quantitative metrics using 10k images generated from COCO captions~\citep{lin2014microsoft}. Here,
we observe a constant improvement of the FID metric across all guidance scales (also see Fig.~\ref{fig:t2i_scale_interpolation} for an apples-to-apples comparison), with approximately the same level of CLIP similarity or better. 
\begin{wraptable}[8]{r}{0.5\textwidth}
\centering
\setlength{\tabcolsep}{0.2em}
\resizebox{0.48\textwidth}{!}{
\begin{tabular}{llcc}
\toprule
Model & Metric & CFG & CFG++ (\textcolor{cfgpp}{ours}) \\
\midrule
\multirow{3}{*}{SDXL-Lightning} & FID$\downarrow$ & 59.67 & \textbf{59.21} \\
& CLIP$\uparrow$ & 0.320 & \textbf{0.325} \\
& ImageReward$\uparrow$& 0.777 & \textbf{0.968} \\
\midrule
\multirow{2}{*}{SD v1.5 (DPM++ 2M)} & FID$\downarrow$ & 32.72 & \textbf{32.58} \\
& CLIP$\uparrow$ & \textbf{0.313} & 0.312 \\
\bottomrule
\end{tabular}
}
\caption{
Quant. eval. on accelerated T2I sampling
}
\label{tab:results_t2i_lightning_dpm}
\end{wraptable}
The improvements can also be clearly seen in Fig.~\ref{fig:t2i_sd1.5} (SD v1.5), where the unnatural components of the generated images are corrected. Specifically, we see that unnatural depictions of human hands, and incorrect renderings of the text are corrected in CFG++, a long-standing research question in and of its own~\citep{podell2023sdxl,pelykh2024giving,chen2023textdiffuser}.
We find that the improvement gain from CFG++ is even more dramatic for distilled diffusion models such as SDXL-\{turbo, lightning\}. We quantify the results on the same 5k prompts with 6 NFE sampling. In Fig.~\ref{fig:t2i_sdxl_lightning} and Fig.~\ref{fig:t2i_sdxl_lightning_v2}, we see significant boosts in the quality of the generated images, which is also depicted in the improvements seen in Tab.~\ref{tab:results_t2i_lightning_dpm}.

To show the compatibility of CFG++ with higher-order solvers, we experiment with DPM++2M~\citep{lu2022dpmpp} solver with 20 NFE. We note that in such low NFE regime, a CFG scale of 5.0 corresponds to the CFG++ scale of 1.0, and we have to slightly extrapolate the value of $\lambda \geq 1.0$ to achieve stronger guidance results. In Tab.~\ref{tab:results_t2i_lightning_dpm}, we again see that CFG++ yields results that are favorable to standard CFG.

\begin{figure}[!t]
    \centering
    \includegraphics[width=0.9\textwidth]{./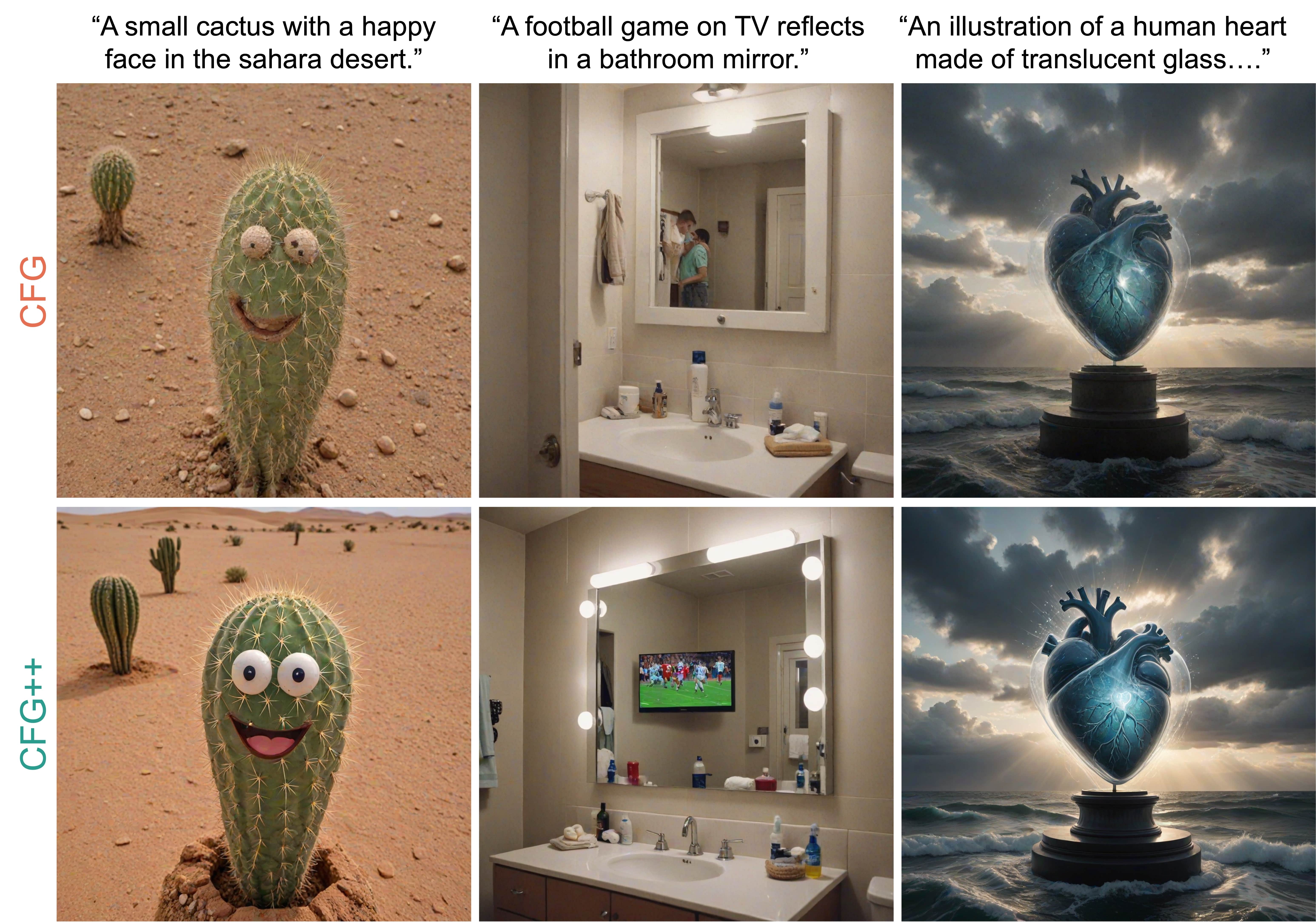}
    \caption{T2I using SDXL-Lightning, 6 NFE, CFG vs CFG++.}
    \label{fig:t2i_sdxl_lightning}
    \vspace{-0.5cm}
\end{figure}

\subsection{Diffusion image Inversion and Editing}
\label{subsec: inversion}
\begin{figure}[!t]
    \centering
    \includegraphics[width=\linewidth]{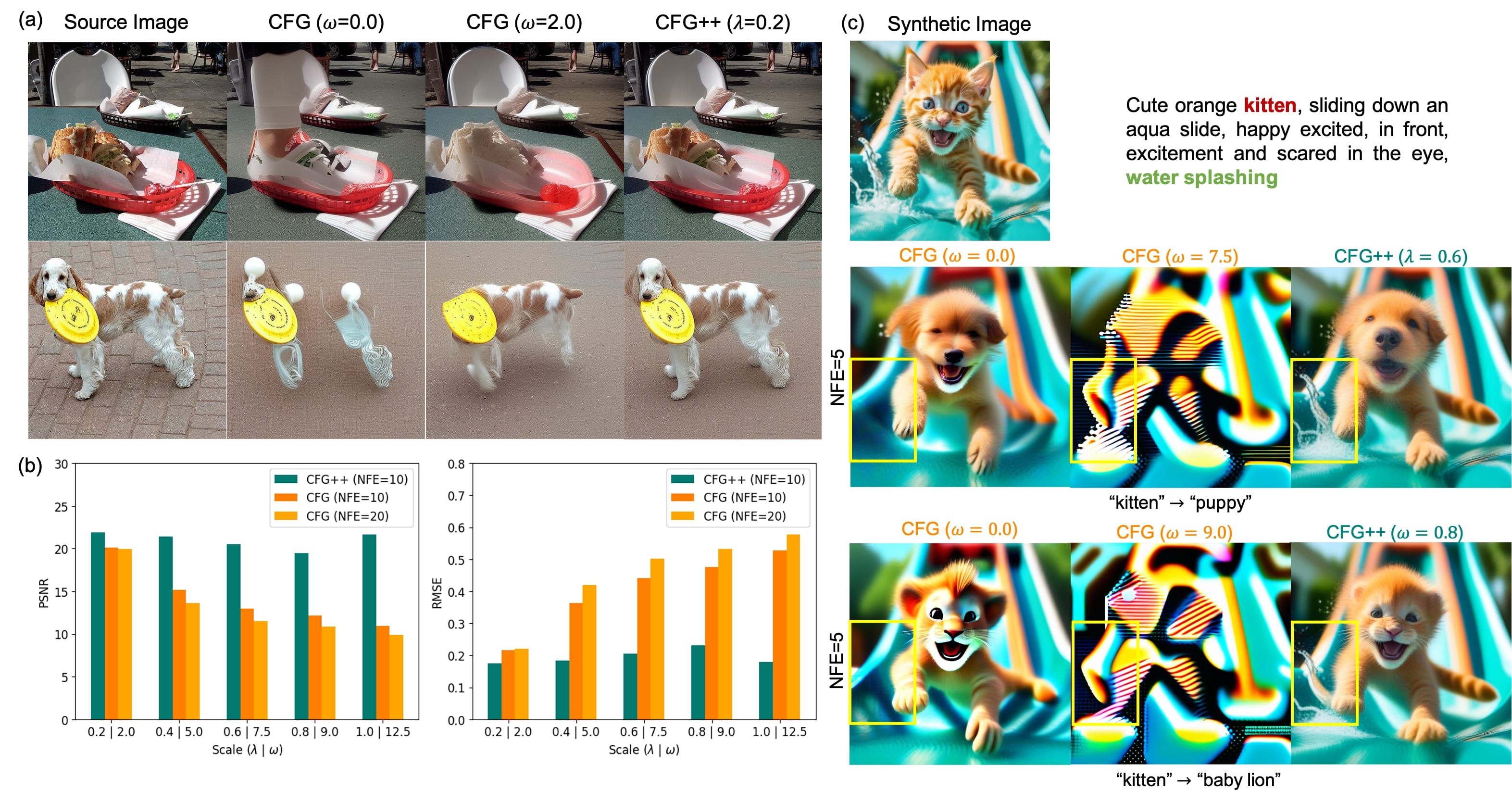}
    \caption{
    Inversion and editing results.
    (a) Reconstructed samples after inversion by CFG and CFG++. (b) Quantitative comparison between CFG and CFG++ for reconstruction. (c) Image editing comparison via SDXL.}
    \label{fig:inversion_edit}
    \vspace{-0.5cm}
\end{figure}

We further explore the effect of CFG++ on DDIM inversion~\citep{dhariwal2021diffusion}, where the source image is reverted to a latent vector that can reproduce the original image through generative process. DDIM inversion is well-known to break down in the usual CFG setting as CFG magnifies the accumulated error of each inversion step \citep{mokady2023null}, violating local linearization assumptions \citep{wallace2023edict}. We show that CFG++ mitigates this issue by improving the inversion and image editing capabilities. We evaluate our method following the experimental setups in \citet{park2024energy} and \citet{kim2024dreamsampler}.

\noindent
\textbf{Diffusion Image Inversion.}
Using the matched set of scales for $\omega$ and $\lambda$, we demonstrate the effect of CFG++ on the diffusion image inversion task. Specifically, we reconstruct the images after inversion and evaluate it through PSNR and RMSE, following the methodology of \citet{wallace2023edict}. In Fig.~\ref{fig:inversion_edit}, we illustrate the reconstructed examples (\ref{fig:inversion_edit}a) from a real image from COCO data set and computed metrics (\ref{fig:inversion_edit}b) for 5k COCO-2014~\citep{lin2014microsoft} validation set. Both qualitative and quantitative evaluations demonstrate a consistent improvement on reconstruction performance induced by CFG++.
Notably, DDIM inversion with CFG++ leads to a consistent reconstruction of the source image across all guidance scales, while DDIM inversion with CFG fails to reconstruct it, as shown in Fig.~\ref{fig:inversion_scale} for another real image.

\noindent
\textbf{Image Editing.}
Fig.~\ref{fig:inversion_edit}c compares the image editing result using CFG and CFG++ followed by image inversion with 5 NFEs. In the image editing stage, a word in the source text highlighted by green color is swapped with the target concept, and this modified text is used as the condition for sampling. CFG++ successfully edits the target concept while preserving other concepts, such as background. In particular, the water splashing in the background, which tends to disappear in the conventional CFG, is maintained through the inversion process by CFG++. This emphasizes CFG++'s superior ability to retain specific scene elements that are frequently lost in the standard CFG approach. Moreover, standard CFG's disrupted DDIM inversion leads to saturated and less faithful editing results, whereas CFG++ delivers precise and high-fidelity edits.

\subsection{Text-Conditioned Inverse Problems}
Inverse problem involves restoring the original data $\x$ from a noisy measurement $\y = \Ac(\x) + \n$, where $\Ac$ represents an imaging operator introduces distortions (e.g, Gaussian blur) and $\n$ denotes the measurement noise. Diffusion inverse solvers (DIS) address this challenge by leveraging pre-trained diffusion models as implicit priors and performing posterior sampling $\x \sim p(\x|\y)$. 
Methods that leverage latent diffusion have gained recent interest~\citep{rout2024solving,song2024solving}, but leveraging texts for solving these problems remains relatively underexplored~\citep{chung2023prompt,kim2023regularization}. One of the main reasons for this is that naively using CFG on top of latent DIS leads to diverging samples~\citep{chung2023prompt}. Several heuristic modifications such as null prompt optimization~\citep{kim2023regularization} with modified sampling schemes were needed to mitigate this drawback. This naturally leads to the question: Is it possible to leverage CFG guidance as a plug-and-play component of existing solvers? Here, we answer this question with a positive by showing that CFG++ enables the incorporation of text prompts into a standard solver. Specifically, we focus on comparing the performance of PSLD \citep{rout2024solving}  combined with CFG and CFG++ in solving linear inverse problems. This evaluation utilizes the FFHQ \citep{karras2019style} 512x512 dataset and the text prompt "a high-quality photo of a face". Further details about the experimental settings can be found in the Appendix~\ref{sec:exp_details}.

As shown in Tab.~\ref{tab:results_ldis}, our method mostly outperforms both the vanilla PSLD and PSLD with CFG.
The superiority is also evident from the Fig.~\ref{fig:ldis_main}, where CFG++ consistently delivers high-quality reconstructions across all tasks. PSLD with CFG often suffer from artifacts and blurriness. Conversely, CFG++ achieves better fidelity, clearly distinguishing between faces and faithfully reproducing fine details like eyelids and hair texture. For more results, please refer to the Appendix~\ref{sec:further_results}. 

\begin{figure}[!t]
    \centering
    \includegraphics[width=0.9\linewidth]{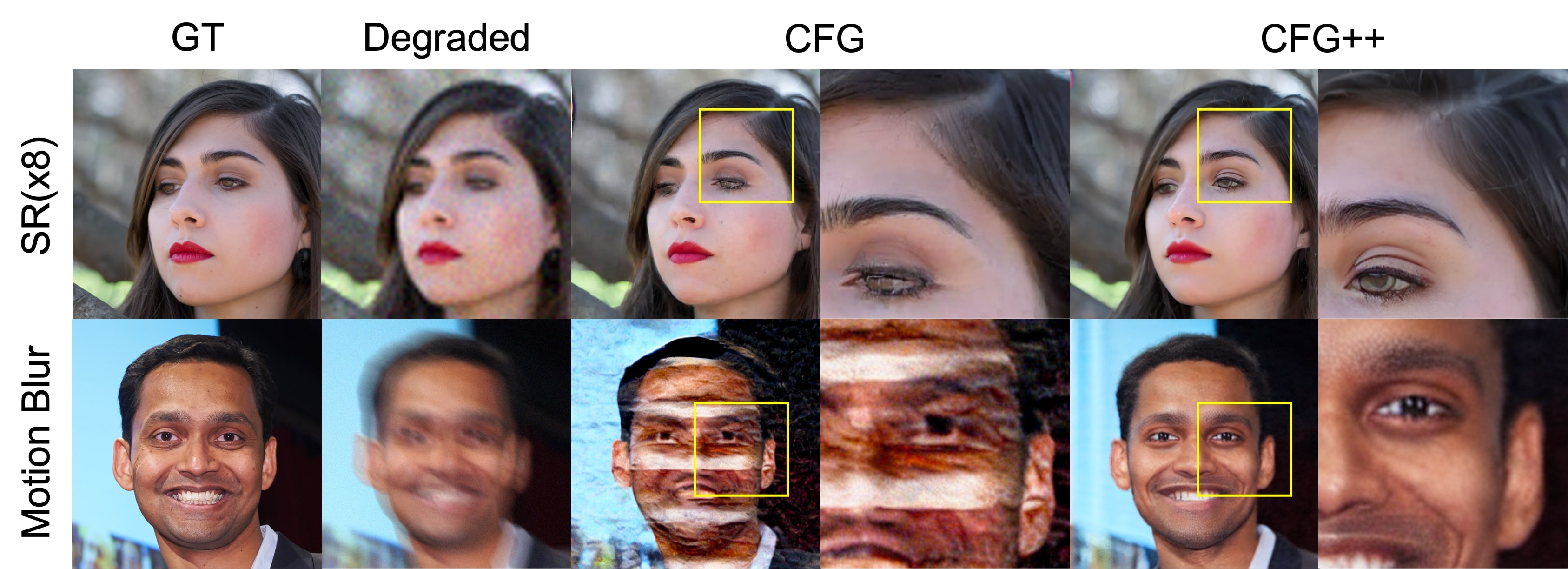}
    \caption{Qualitative comparison on various inverse problems using PSLD \citep{rout2024solving} under CFG and CFG++. For more results, please refer to the Appendix~\ref{sec:further_results}.}
    \vspace{-0.4cm}
    \label{fig:ldis_main}
\end{figure}

\begin{table}[!h]
\centering
\resizebox{1.0\textwidth}{!}{
\begin{tabular}{lcccccccccccc}
\toprule
{} & \multicolumn{3}{c}{SR (x8)} & \multicolumn{3}{c}{Deblur (motion)} & \multicolumn{3}{c}{Deblur (gauss)} &
\multicolumn{3}{c}{Inpaint}\\
\cmidrule(lr){2-4}
\cmidrule(lr){5-7}
\cmidrule(lr){8-10}
\cmidrule(lr){11-13}
{\textbf{Method}} & {FID $\downarrow$} & {LPIPS $\downarrow$} & {PSNR $\uparrow$} & {FID $\downarrow$} & {LPIPS $\downarrow$} & {PSNR $\uparrow$} & {FID $\downarrow$} & {LPIPS $\downarrow$} & {PSNR $\uparrow$} & {FID $\downarrow$} & {LPIPS $\downarrow$} & {PSNR $\uparrow$}\\
\midrule
\thead{PSLD} & 46.24 & 0.413 & 24.41 & 97.51 & 0.500 & 21.83 & 41.65 & \textbf{0.388} & 26.88 & 10.27 & 9.36 & 30.15 \\
\thead{PSLD + CFG} & 41.24 & 0.394 & \textbf{24.91} & 91.90 & 0.493 & \textbf{22.29} & 41.52 & 0.390 & \textbf{26.94} & \textbf{9.36} & 0.055 & 30.27 \\
\thead{PSLD + CFG++ \textcolor{pinegreen}{(ours)}} & \textbf{36.58} & \textbf{0.385} & 24.87 & \textbf{65.67} & \textbf{0.482} & 21.93 & \textbf{39.85} & 0.400 & 26.90 & 9.78 & \textbf{0.052} & \textbf{30.31} \\

\bottomrule
\end{tabular}
}
\caption{Quantitative comparison (FID, LPIPS, PSNR) of PSLD, PSLD with CFG, and PSLD with CFG++ on Latent Diffusion Inverse Solver.} 
\vspace{-0.5cm}
\label{tab:results_ldis}
\end{table}

\section{Related Works and Discussions}
\label{sec:discussion}

\citet{kynkaanniemi2024applying} observed that applying CFG guidance at the earlier stages of sampling always led to detrimental effects and drastically reduced the diversity of the samples. The guidance at the later stages of sampling had minimal effects. Drawing upon these observations, it was empirically shown that applying CFG only in the limited time interval near the middle led to the best performance. Similar observations were made in the SD community~\citep{howard2022,birch2023} in adjusting the guidance scale across $t$. These findings are orthogonal to ours in that these methods keep the sampling trajectory the same and try to empirically adjust the strength of the guidance, while we aim to design a different trajectory.
More recently, \citet{bradley2024classifier} observed that the CFG score is not a valid denoising direction, showing that in the asymptotic limit, CFG sampling can be considered as a specific type of predictor-corrector (PC) sampling, dubbed PCG. PCG shares a similar spirit with CFG++ in that for the mainstream of sampling, one defers from the use of mixed CFG score (we use unconditional score, PCG uses conditional score) to avoid invalid directions. While PCG derives an interesting connection of PC sampling with CFG, the proposed algorithm does not improve the sampling performance. In contrast, CFG++ improves the sample quality without additional computation overhead.
Since CFG++ attempts to make adjustments of CFG using a linear combination of score functions, it is not surprising to see that CFG++ sampling can be achieved by setting a time varying schedule. 
Specifically, by setting a time-varying schedule $\omega_t$ as $\omega_t = -\gamma_t / \xi_t$, where $\gamma_t = \frac{\sqrt{\bar\alpha_{t-1}}\sqrt{1 - \bar\alpha_t}}{\sqrt{\bar\alpha_t}}, \xi_t = \sqrt{1 - \bar\alpha_{t-1}} - \frac{\sqrt{\bar\alpha_{t-1}}\sqrt{1 - \bar\alpha_t}}{\sqrt{\bar\alpha_t}}$, one can achieve the same effect with CFG as CFG++. However, this specific choice of time-varying guidance scale has not been reported in the literature, as it is a schedule that is relatively complex to be drawn heuristically.  We further show in Appendix~\ref{sec:evolution_posterior_mean} that such choice removes undesirable oscillatory behavior in the evolution of the posterior mean.

\section{Conclusion}
This paper revises the most widespread guidance method, classifier-free guidance (CFG), highlighting that a small and reasonable guidance scale, e.g. $\lambda \in [0.0, 1.0]$, might suffice successful guidance with a proper geometric correction. Observing that the original CFG suffers from off-manifold issues during sampling, we propose a simple but surprisingly effective fix. This change translates the conditional guidance from extrapolating to interpolating between unconditional and conditional sampling trajectories, leading to more interpretable guidance in contrast to the conventional CFG which relies on heuristic $\omega$ scaling. Given that CFG++ mitigates off-manifold issues, it may be beneficial for other downstream applications requiring accurate latent denoised estimate representations, e.g. DIS, as demonstrated in our experiments.
\label{sec:conclusion}


\subsubsection*{Acknowledgments}
We thank Jinho Chang for the helpful discussions.

\bibliography{iclr2025_conference}
\bibliographystyle{iclr2025_conference}

\clearpage
\appendix

\begin{figure}[!t]
\begin{minipage}{.49\textwidth}
    \vspace{-0.7cm}
    \setstretch{1.05}
    \begin{algorithm}[H]
    \small
           \caption{DDIM Inversion with CFG}
           \label{alg:inversion_cfg}
            \begin{algorithmic}[1]
             \Require $\x_0, 0 \leq \textcolor{cfg}{\omega} \in \mathbb{R}$
             \For{$i=0$ {\bfseries to} $T-1$}
                 \State{$\epscfgnc(\x_t) = \epsnullnc(\x_t) + \textcolor{cfg}{\omega} [\epscond(\x_t) - \epsnullnc(\x_t)]$}
                 \State{$\tweediecfgnc(\x_t) \gets (\x_t - \sqrt{1-\bar\alpha_t}\epscfg(\x_t))/\sqrt{\bar\alpha_t}$}
                 \State{$\x_{t+1} = \sqrt{\bar\alpha_{t+1}} \tweediecfgnc(\x_t) + \sqrt{1-\bar\alpha_{t+1}}\epscfgnc(\x_t)$}
              \EndFor
              \State {\bfseries return} $\x_T$
            \end{algorithmic}
    \end{algorithm}
\end{minipage}
\begin{minipage}{.49\textwidth}
    \vspace{-0.7cm}
    \begin{algorithm}[H]
           \small
           \caption{DDIM Inversion with CFG++}
           \label{alg:inversion_cfg++}
            \begin{algorithmic}[1]
             \Require $\x_0, \textcolor{cfgpp}{\lambda} \in [0, 1]$
             \For{$i=0$ {\bfseries to} $T-1$}
                 \State{$\epscfgppnc(\x_t) = \epsnullnc(\x_t) + \textcolor{cfgpp}{\lambda} [\epscond(\x_t) - \epsnullnc(\x_t)]$}
                 \State{$\tweediecfgppnc(\x_t) \gets (\x_t - \sqrt{1-\bar\alpha_t}\epsnull(\x_t))/\sqrt{\bar\alpha_t}$}
                 \State{$\x_{t+1} = \sqrt{\bar\alpha_{t+1}} \tweediecfgppnc(\x_t) + \sqrt{1-\bar\alpha_{t+1}}\epscfgppnc(\x_t)$}
            \EndFor
            \State {\bfseries return} $\x_T$
            \end{algorithmic}
    \end{algorithm}
\end{minipage}
    \caption{Comparison between DDIM inversion. CFG++ proposes a simple yet effective fix: using $\epsnull(\x_t)$ instead of $\epscfg(\x_t)$ in Tweedie's denoising step. }
\vspace{-0.5em}
\end{figure}
\section{Extension of CFG++ to other solvers}
\label{app:higher_order_solvers}

In this section, we discuss ways in which we can apply CFG++ to a more diverse set of ODE/SDE solvers. We consider solving the variance exploding (VE) PF-ODE as presented in \eqref{eq:pfode}, as re-parametrization VP diffusion models can easily recover the VE formulation, as often implemented in widely used frameworks such as \url{https://github.com/crowsonkb/k-diffusion}. Following the notation in \citet{lu2022dpmpp}, we consider a sequence of timesteps $\{t_i\}_{i=0}^M$, where $t_0 = T$ denotes the initial starting point of the reverse sampling (i.e. Gaussian noise).

\paragraph{Euler~\citep{karras2022elucidating}}
The construction is the same as in DDIM. We include it here for completeness. The update step reads
\begin{align*}
    \x_{t_{i+1}} &= \tweediecfg(\x_{t_i}) + \frac{\x_{t_i} - \tweediecfg(\x_{t_i})}{\sigma_{t_i}}\cdot \sigma_{t_{i+1}}, \quad &&\mbox{(CFG)}\\
    \x_{t_{i+1}} &= \tweediecfgpp(\x_{t_i}) + \frac{\x_{t_i} - \tweedienull(\x_{t_i})}{\sigma_{t_i}}\cdot \sigma_{t_{i+1}}, \quad &&\mbox{(CFG++)}
\end{align*}
\paragraph{Euler Ancestral}
Euler Ancestral sampler follows Euler sampler, but introduces stochasticity by taking larger steps and then adding a slight amount of noise. Adjusting with CFG++ is straightforward.
\begin{align*}
    \x_{t_{i+1}} &= \tweediecfg(\x_{t_i}) + \frac{\x_{t_i} - \tweediecfg(\x_{t_i})}{\sigma_{t_i}}\cdot (\sigma_{t_{d_i}} - \sigma_{t_i}) + \sigma_{t_i} \epsilonb, \quad &&\mbox{(CFG)}\\
    \x_{t_{i+1}} &= \tweediecfgpp(\x_{t_i}) + \frac{\x_{t_i} - \tweedienull(\x_{t_i})}{\sigma_{t_i}}\cdot (\sigma_{t_{d_i}} - \sigma_{t_i}) + \sigma_{t_i} \epsilonb, \quad &&\mbox{(CFG++)}
\end{align*}
where $t_i > t_{d_i} > t_{i+1}$ and $\epsilonb \sim \Nc(0, \Ib)$.

\paragraph{DPM-solver++ 2M~\citep{lu2022dpmpp}}
Define $\sigma_t := e^{-t},\,h_i := t_i - t_{i-1},$ and $r_i := h_{i-1}/h_{i}$. After initializing $\x_{t_0}$ with Gaussian noise, the first iteration is given by
\begin{align*}
    \x_{t_1} &= \tweediecfg(\x_{t_0}) + e^{-h_1}(\x_{t_0} - \tweediecfg(\x_{t_0})) \quad &&\mbox{(CFG)} \\
    \x_{t_1} &= \tweediecfgpp(\x_{t_0}) + e^{-h_1}(\x_{t_0} - \tweedienull(\x_{t_0})) \quad &&\mbox{(CFG++)}
\end{align*}
The following iterations by using the standard CFG reads
\begin{align}
\label{eq:dpmpp_2m_cfg_1}
    \bm{D}_i &= \tweediecfg(\x_{t_{i-1}}) + \frac{1}{2r_i}\left(
    \tweediecfg(\x_{t_{i-1}}) - \tweediecfg(\x_{t_{i-2}})
    \right),\\
    \x_{t_i} &= e^{-h_i}\x_{t_{i-1}} - (e^{-h_i} - 1)\bm{D}_i.
\label{eq:dpmpp_2m_cfg_2}
\end{align}
Rearranging \eqref{eq:dpmpp_2m_cfg_1}, \eqref{eq:dpmpp_2m_cfg_2}, we can rewrite the update steps as
\begin{align}
\label{eq:dpmpp_2m_cfg_rearrange}
    \x_{t_i} &= \tweediecfg(\x_{t_{i-1}}) - e^{-h_i}\tweediecfg(\x_{t_{i-1}}) + \frac{1 - e^{-h_i}}{2r_i}\left(
    \tweediecfg(\x_{t_{i-1}}) - \tweediecfg(\x_{t_{i-2}})
    \right) + e^{-h_i}\x_{t_{i-1}}.
\end{align}
Notice that in order to apply CFG++ to \eqref{eq:dpmpp_2m_cfg_rearrange}, we should only keep the first term as the conditional Tweedie, and use the unconditional estimates for the rest of the components. i.e.
\begin{align}
\label{eq:dpmpp_2m_cfgpp}
    \x_{t_i} &= \tweediecfgpp(\x_{t_{i-1}}) - e^{-h_i}\tweedienull(\x_{t_{i-1}}) + \frac{1 - e^{-h_i}}{2r_i}\left(
    \tweedienull(\x_{t_{i-1}}) - \tweedienull(\x_{t_{i-2}})
    \right) + e^{-h_i}\x_{t_{i-1}}
\end{align}

\paragraph{DPM-solver++ 2S~\citep{lu2022dpmpp}}
With the same choices of $\sigma_t, h_i$, we additionally define the timesteps $\{s_i\}_{i=1}^M$ with $t_i > s_{i+1} > t_{i+1}$. Further, let $r_i = \frac{s_i - t_{i-1}}{t_i - t_{i-1}}$. Using standard CFG, the iteration reads
\begin{align}
    \bm{u}_i &= e^{-r_ih_i}\x_{t_{i-1}} + (1 - e^{-r_ih_i})\tweediecfg(\x_{t_{i-1}}) \\
    \x_{t_i} &= \tweediecfg(\x_{t_{i-1}}) - e^{-h_i}\tweediecfg(\x_{t_{i-1}}) + \frac{1 - e^{-h_i}}{2r_i}\left(
    \tweediecfg(\bm{u}_i) - \tweediecfg(\x_{t_{i-1}})
    \right) + e^{-h_i}\x_{t_{i-1}}
    \label{eq:dpm++_2s_cfg}
\end{align}
Applying the general transition rule from CFG to CFG++ as introduced in \eqref{eq:general_sampling_iteration_cfgpp}, we can keep all of the Tweedie estimates to be unconditional, and only change the first term of \eqref{eq:dpm++_2s_cfg}, i.e.
\begin{align}
    \bm{u}_i &= e^{-r_ih_i}\x_{t_{i-1}} + (1 - e^{-r_ih_i})\tweedienull(\x_{t_{i-1}}) \\
    \x_{t_i} &= \tweedienull(\x_{t_{i-1}}) - e^{-h_i}\tweedienull(\x_{t_{i-1}}) + \frac{1 - e^{-h_i}}{2r_i}\left(
    \tweediecfgpp(\bm{u}_i) - \tweedienull(\x_{t_{i-1}})
    \right) + e^{-h_i}\x_{t_{i-1}}.
    \label{eq:dpm++_2s_cfgpp}
\end{align}
For the ancestral version of DPM-solver++ 2S, we can follow the general transition rule, as was shown for Euler $\rightarrow$ Euler Ancestral.

\section{Evolution of the posterior mean: CFG vs. CFG++}
\label{sec:evolution_posterior_mean}

Recall that one can equivalently view the evolution of the posterior mean $\Ed[\x_0|\x_t]$ rather than the noisy variables $\x_t$.
In this context, we derive the sequential evolution of conditional posterior mean $\tweediecfg$ and $\tweediecfgpp$ through time $t$ to further understand the underlying behavior of the proposed sampling.

\begin{restatable}[]{proposition}{drift}
    \label{thm:drift}
    Let $d\z(\x_t) := \z(\x_t) - \z(\x_{t+1})$ denote the discrete time evolution of some random variable $\z$ at time $t$. Then, the evolution of $\tweediecfg$ of CFG and $\tweediecfgpp$ of CFG++ is given by
    \begin{align}
    \label{eq:cfg_drift}
        d\tweediecfg(\x_t) &= \frac{\sqrt{1 - \bar\alpha_{t}}}{\sqrt{\bar\alpha_{t}}} d\epsnull(\x_t) + \omega (\Delta(\x_t, \cb) - \Delta(\x_{t+1}, \cb)) \\
        d\tweediecfgpp(\x_t) &= \frac{\sqrt{1 - \bar\alpha_{t}}}{\sqrt{\bar\alpha_{t}}} \underbrace{d\epsnull(\x_t)}_{\text{uncond. shift}} + \lambda \underbrace{\Delta(\x_t, \cb)}_{\text{cond. shift}},
    \label{eq:cfgpp_drift}
    \end{align}
    where $\Delta(\x_t, \cb) := \tweediecond(\x_t) - \tweedienull(\x_t)$.
\end{restatable}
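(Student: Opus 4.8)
The plan is to reduce both identities to a single substitution, using two ingredients that are already available: the linear decompositions $\tweediecfgnc(\x_t)=\tweedienull(\x_t)+\omega\big(\tweediecond(\x_t)-\tweedienull(\x_t)\big)$ and $\tweediecfgppnc(\x_t)=\tweedienull(\x_t)+\lambda\big(\tweediecond(\x_t)-\tweedienull(\x_t)\big)$ established earlier, together with Tweedie's formula \eqref{eq:tweedie}, which also yields the useful identity $\Delta(\x_t,\cb)=\tweediecond(\x_t)-\tweedienull(\x_t)=-\tfrac{\sqrt{1-\bar\alpha_t}}{\sqrt{\bar\alpha_t}}\big(\epscond(\x_t)-\epsnull(\x_t)\big)$. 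Everything then becomes bookkeeping of a one-step recursion between $\x_t$ and $\x_{t+1}$.

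For CFG++ I would start from the definitional identity \eqref{eq:cfgp2tweedie}, $\tweediecfgppnc(\x_t)=\big(\x_t-\sqrt{1-\bar\alpha_t}\,\epscfgppnc(\x_t)\big)/\sqrt{\bar\alpha_t}=\tweedienull(\x_t)+\lambda\Delta(\x_t,\cb)$, and substitute the renoising step \eqref{eq:cfgp2ddim} of Algorithm~\ref{alg:cfg++} shifted by one index, $\x_t=\sqrt{\bar\alpha_t}\,\tweediecfgppnc(\x_{t+1})+\sqrt{1-\bar\alpha_t}\,\epsnull(\x_{t+1})$, into $\tweedienull(\x_t)=\big(\x_t-\sqrt{1-\bar\alpha_t}\,\epsnull(\x_t)\big)/\sqrt{\bar\alpha_t}$. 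The factor $\sqrt{\bar\alpha_t}$ cancels, leaving $\tweedienull(\x_t)=\tweediecfgppnc(\x_{t+1})+\tfrac{\sqrt{1-\bar\alpha_t}}{\sqrt{\bar\alpha_t}}\big(\epsnull(\x_{t+1})-\epsnull(\x_t)\big)$; subtracting $\tweediecfgppnc(\x_{t+1})$ and adding $\lambda\Delta(\x_t,\cb)$ to both sides rearranges to \eqref{eq:cfgpp_drift}. The structural reason this closes cleanly is that CFG++ renoises with $\epsnull$, which is exactly the noise estimate appearing in Tweedie's formula, so the prefactor cancels and no $\Delta$-term evaluated at $\x_{t+1}$ is left behind; the identity is exact.

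For CFG the same recipe starts from $\tweediecfgnc(\x_t)=\big(\x_t-\sqrt{1-\bar\alpha_t}\,\epscfgnc(\x_t)\big)/\sqrt{\bar\alpha_t}$ and uses the CFG renoising step of Algorithm~\ref{alg:cfg}, $\x_t=\sqrt{\bar\alpha_t}\,\tweediecfgnc(\x_{t+1})+\sqrt{1-\bar\alpha_t}\,\epscfgnc(\x_{t+1})$; substituting gives $d\tweediecfgnc(\x_t)=\tfrac{\sqrt{1-\bar\alpha_t}}{\sqrt{\bar\alpha_t}}\big(\epscfgnc(\x_{t+1})-\epscfgnc(\x_t)\big)$. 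Expanding $\epscfgnc=\epsnull+\omega(\epscond-\epsnull)$ and regrouping splits this into an unconditional-shift term proportional to $d\epsnull(\x_t)$ plus $\omega\tfrac{\sqrt{1-\bar\alpha_t}}{\sqrt{\bar\alpha_t}}\big[(\epscond(\x_{t+1})-\epsnull(\x_{t+1}))-(\epscond(\x_t)-\epsnull(\x_t))\big]$, and converting each bracket through the Tweedie identity above identifies this second piece with $\omega\big(\Delta(\x_t,\cb)-\Delta(\x_{t+1},\cb)\big)$, producing \eqref{eq:cfg_drift}.

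The step I expect to be the main obstacle is precisely this last conversion in the CFG case: the Tweedie identity attaches the prefactor $\sqrt{1-\bar\alpha_{t+1}}/\sqrt{\bar\alpha_{t+1}}$ to $\Delta(\x_{t+1},\cb)$, whereas the substitution produced $\epscond(\x_{t+1})-\epsnull(\x_{t+1})$ carrying the prefactor $\sqrt{1-\bar\alpha_{t}}/\sqrt{\bar\alpha_{t}}$, so the two agree only under the standard slowly-varying-schedule approximation $\sqrt{1-\bar\alpha_{t}}/\sqrt{\bar\alpha_{t}}\approx\sqrt{1-\bar\alpha_{t+1}}/\sqrt{\bar\alpha_{t+1}}$ over one discretization step — which I would invoke explicitly, or equivalently read $\Delta(\x_{t+1},\cb)$ in \eqref{eq:cfg_drift} with the $\bar\alpha_t$ normalization. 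No such approximation enters the CFG++ derivation, which is exactly why \eqref{eq:cfgpp_drift} retains only the single conditional-shift term $\lambda\Delta(\x_t,\cb)$ rather than the telescoping difference $\Delta(\x_t,\cb)-\Delta(\x_{t+1},\cb)$; this asymmetry is what drives the contrast between CFG's oscillatory posterior-mean trajectory and the smoother CFG++ one.
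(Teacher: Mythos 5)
Your proposal is correct and follows essentially the same route as the paper's proof: substitute the one-step renoising recursion $\x_t=\sqrt{\bar\alpha_t}\,\hat\x(\x_{t+1})+\sqrt{1-\bar\alpha_t}\,\hat\epsilonb(\x_{t+1})$ into the denoised estimate at time $t$, expand the guided noise into unconditional and directional components, and regroup. You are in fact slightly more careful than the paper on one point: converting $\epscond(\x_{t+1})-\epsnull(\x_{t+1})$ into $\Delta(\x_{t+1},\cb)$ in the CFG case does carry the prefactor mismatch $\sqrt{1-\bar\alpha_t}/\sqrt{\bar\alpha_t}$ versus $\sqrt{1-\bar\alpha_{t+1}}/\sqrt{\bar\alpha_{t+1}}$, so \eqref{eq:cfg_drift} holds only under the slowly-varying-schedule approximation you state explicitly, whereas the paper's proof passes over this silently; your observation that no such approximation enters the CFG++ identity is likewise correct.
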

\begin{proof}
    We start by writing the iteration from $t+1 \rightarrow t$
    \begin{align}
        \x_t = \sqrt{\bar\alpha_t} \tweediecfg(\x_{t+1}) + \sqrt{1 + \bar\alpha_t} \epscfg(\x_{t+1}).
    \end{align}
    The Tweedie estimate for the next step is then written as
    \begin{align}
        \tweediecfg(\x_t) &= \frac{\x_t - \sqrt{1-\bar\alpha_t}\epscfg(\x_t)}{ \sqrt{\bar\alpha_t} } \\
        &= \frac{ \sqrt{\bar\alpha_{t}} \tweediecfg(\x_{t+1}) + \sqrt{1-\bar\alpha_{t}} (\epscfg(\x_{t+1}) - \epscfg(\x_t)) }{\sqrt{\bar\alpha_t}} \\
        &= \tweediecfg(\x_{t+1}) + \frac{\sqrt{1 - \bar\alpha_{t}}}{\sqrt{\bar\alpha_{t}}} \Bigl[ \epsnull(\x_{t+1}) - \epsnull(\x_t) + \omega(\epscond(\x_{t+1}) - \epsnull(\x_{t+1})) \notag\\
        &\phantom{= \tweediecfg(\x_{t+1}) + \frac{\sqrt{1 - \bar\alpha_{t}}}{\sqrt{\bar\alpha_{t}}} \Bigl[ \epsnull(\x_{t+1}) - \epsnull(\x_t)} - \omega(\epscond(\x_t) - \epsnull(\x_t)) \Bigr].
    \end{align}
    Using the relation $\epscfg(\x_t) = -(\sqrt{\bar\alpha_t}\tweediecfg(\x_t) - \x_t) / \sqrt{1 - \bar\alpha_t}$, we have
    \begin{align}
        d\tweediecfg(\x_t) &= \frac{\sqrt{1 - \bar\alpha_{t}}}{\sqrt{\bar\alpha_{t}}} d\epsnull(\x_t) + \omega \left(\tweediecond(\x_t) - \tweedienull(\x_t)\right) - \omega \left(\tweediecond(\x_{t+1}) - \tweedienull(\x_{t+1})\right).
    \end{align}
    Similarly, for CFG++,
    \begin{align}
        \tweediecfgpp(\x_t) &= \frac{\x_t - \sqrt{1-\bar\alpha_t}\epscfgpp(\x_t)}{ \sqrt{\bar\alpha_t} } \\
        &= \frac{ \sqrt{\bar\alpha_{t}} \tweediecfgpp(\x_{t+1}) + \sqrt{1-\bar\alpha_{t}} (\epsnull(\x_{t+1}) - \epscfgpp(\x_t)) }{\sqrt{\bar\alpha_t}} \\
        &= \tweediecfgpp(\x_{t+1}) + \frac{\sqrt{1 - \bar\alpha_{t}}}{\sqrt{\bar\alpha_{t}}} \left[ \epsnull(\x_{t+1}) - \epsnull(\x_t) - \lambda (\epscond(\x_t) - \epsnull(\x_t)) \right].
    \end{align}
    Hence,
    \begin{align}
        d\tweediecfgpp(\x_t) &= \frac{\sqrt{1 - \bar\alpha_{t}}}{\sqrt{\bar\alpha_{t}}}d\epsnull(\x_t) + \lambda (\tweediecond(\x_t) - \tweedienull(\x_t)),
    \end{align}
\end{proof}

Proposition~\ref{thm:drift} implies that the CFG++ update of $\tweediecfgpp$ is decomposed into two shift terms: 1) $d\epsnull(\x_t)$ represents the difference between consecutive unconditional scores (i.e. unconditional shift), and 2) $\Delta(\x_t, \cb)$ denotes the direction of conditional guidance at time $t$ (i.e. conditional shift). 
The conditional shift term is multiplied by a small interpolation factor $\lambda$ in the case of CFG++, inducing a small nudge toward the condition.

CFG $\tweediecfg$, on the other hand, has the same unconditional shift, but has an \textbf{oscillatory behavior} for the conditional shift. 
The difference between CFG/CFG++ sampling arises from the unexpected additional shift from the previous timestep $t+1$ that exists for the CFG decomposition: $-\Delta(\x_{t+1}, \cb)$, and the scaling constant $\omega$.The initial conditional shift $\omega \Delta(\x_t, \cb)$ with a large $\omega$ pushes the trajectory off the manifold, but cancels some of its effects by subtracting the conditional shift from the previous step $\omega \Delta(\x_{t+1}, \cb)$. The compounded vector $\Delta(\x_t, \cb) - \Delta(\x_{t+1}, \cb)$ does induce a nudge closer to the condition but requires a large value of $\omega$ to have a meaningful effect, and thus is hard to interpret.

\section{Experimental details}
\label{sec:exp_details}

\subsection{Text-Conditioned Inverse Problems}

\noindent\textbf{Problem settings.} We evaluate our approach across following degradation types: 1) Super-resolution with a scale factor of x8, 2) Motion deblurring from an image convolved with a 61 x 61 motion kernel, randomly sampled with an intensity value $0.3^2$, 3) Gaussian deblurring from an image convolved with a 61 x 61 kernel with an intensity value 0.5, and 4) Inpainting from 10-20\% free-form masking, as implemented in \citep{saharia2022palette}. In inpainting evaluations, regions outside the mask are overlaid with the ground truth image.

\noindent\textbf{Datasets, Models.} For evaluation, we use the FFHQ \citep{karras2019style} 512x512 dataset and follow \citep{chung2023diffusion} by selecting the first 1,000 images for testing. For the pre-trained latent diffusion model including baseline methods, we choose SD v1.5 trained on the LAION dataset. As a baseline for latent DIS, we consider PSLD \citep{rout2024solving}. It enforces fidelity by projecting onto the subspace of $\Ac$ during the intermediate step between decoding and encoding, in conjunction with the DPS \citep{chung2023diffusion} loss term. For gradient updates in both vanilla PSLD and PSLD with CFG, we use static step sizes of $\eta = 1.0$ and $\gamma = 0.1$ as recommended in \citep{rout2024solving}. For CFG scale $\omega$, we applied the corresponing scales that we found to be corresponded for CFG++ scale $\lambda$. Please refer to the Tab.~\ref{tab:hyperparams_psld} for the hyperparameters used for PSLD with CFG++.

\begin{table}[h]
\centering
\resizebox{0.5\textwidth}{!}{
\begin{tabular}{lcccc}
\toprule
{} & SR(x8) & Deblur(motion) & Deblur(gauss) & Inpaint\\
\midrule
\thead{$\eta$} & 1.3 & 0.4 & 1.0 & 1.0 \\
\thead{$\gamma$} & 0.1 & 0.025 & 0.12 & 0.1 \\
\thead{$\lambda$} & 0.1 & 0.2 & 0.2 & 0.6 \\
\thead{$\omega$} & 1.5 & 2.0 & 2.0 & 7.5 \\
\bottomrule
\end{tabular}
}
\caption{Hyperparameters for PSLD \citep{rout2024solving} with CFG++ and corresponding CFG scale $\omega$.
}
\label{tab:hyperparams_psld}
\end{table}

\section{Further experimental results}
\label{sec:further_results}

\subsection{T2I}
Since the range of guidance scales for CFG and CFG++ is different, we matched the guidance values of $\omega$ and $\lambda$ for comparison by computing the LPIPS distance using the same seed. 

Fig.~\ref{fig:tweedie_progression} shows the clean estimates at each diffusion sampling time $t$, computed using Tweedie's formula, both with and without CFG. It is evident that the original CFG induces significant error, particularly at earlier times, leading to off-manifold samples. However, the  CFG++ addresses this issue by adjusting the DDIM re-noising step and the guidance scale.

\begin{figure}[!t]
    \centering
    \includegraphics[width=.9\textwidth]{./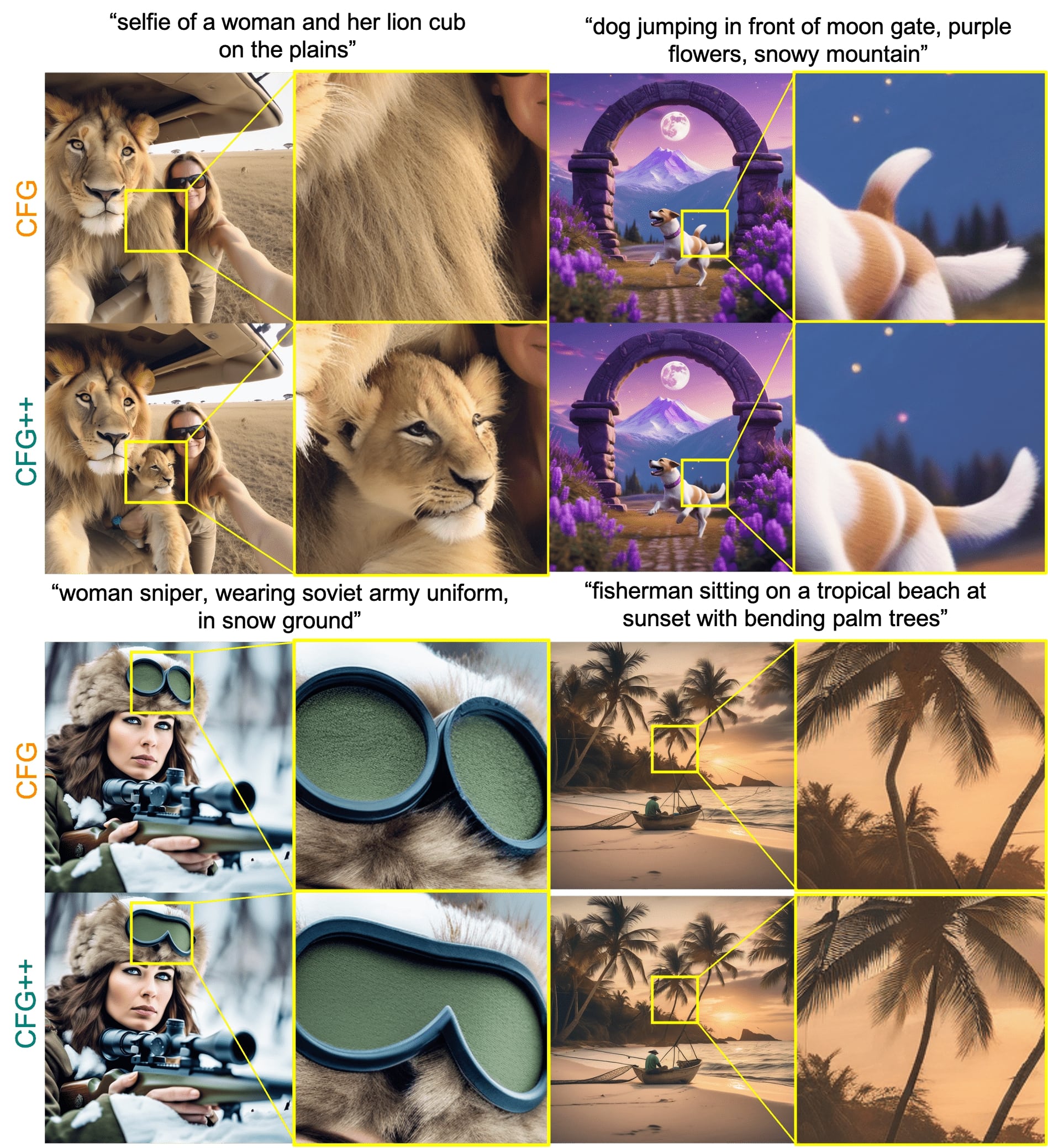}
    \caption{Enhanced T2I results by SDXL ($\omega=9.0, \lambda=0.8$) with CFG++. Under CFG, the lion cub is not visible (top-left), the dog appears with two tails (top-right), the goggles have an unusual shape (bottom-left), and the tree trunk is folded (bottom-right). These artifacts are absent in those produced by CFG++.}
    \label{fig:t2i_sdxl}
\end{figure}

\begin{figure}[!t]
    \centering
    \includegraphics[width=1.0\textwidth]{./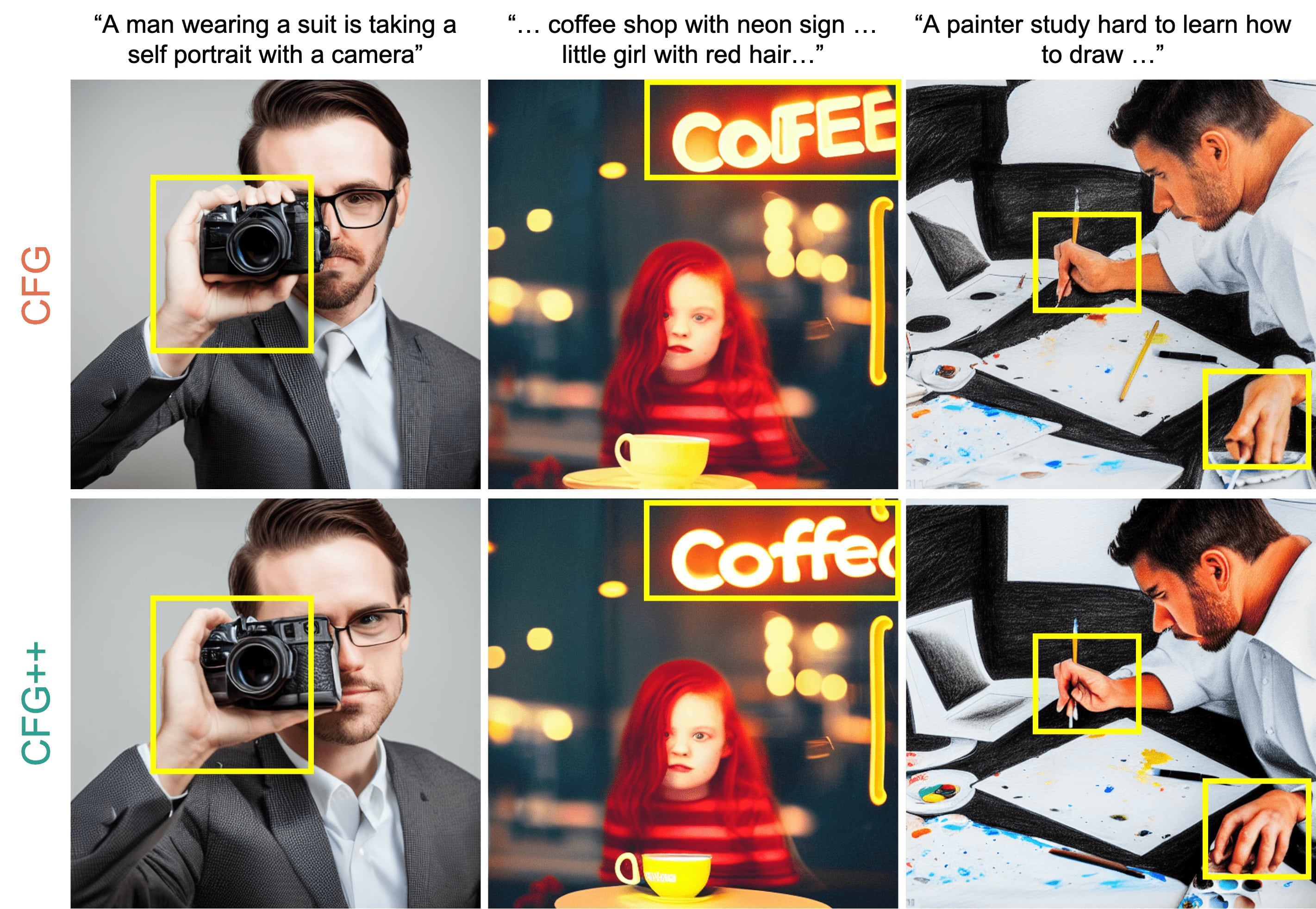}
    \caption{T2I using SD v1.5, CFG vs CFG++ ($\omega=9.0, \lambda=0.8$). Unnatural depictions of human hands, and incorrect renderings of the text by CFG are corrected in CFG++.}
    \label{fig:t2i_sd1.5}
\end{figure}

\begin{figure}[!t]
    \centering
    \includegraphics[width=1.0\textwidth]{./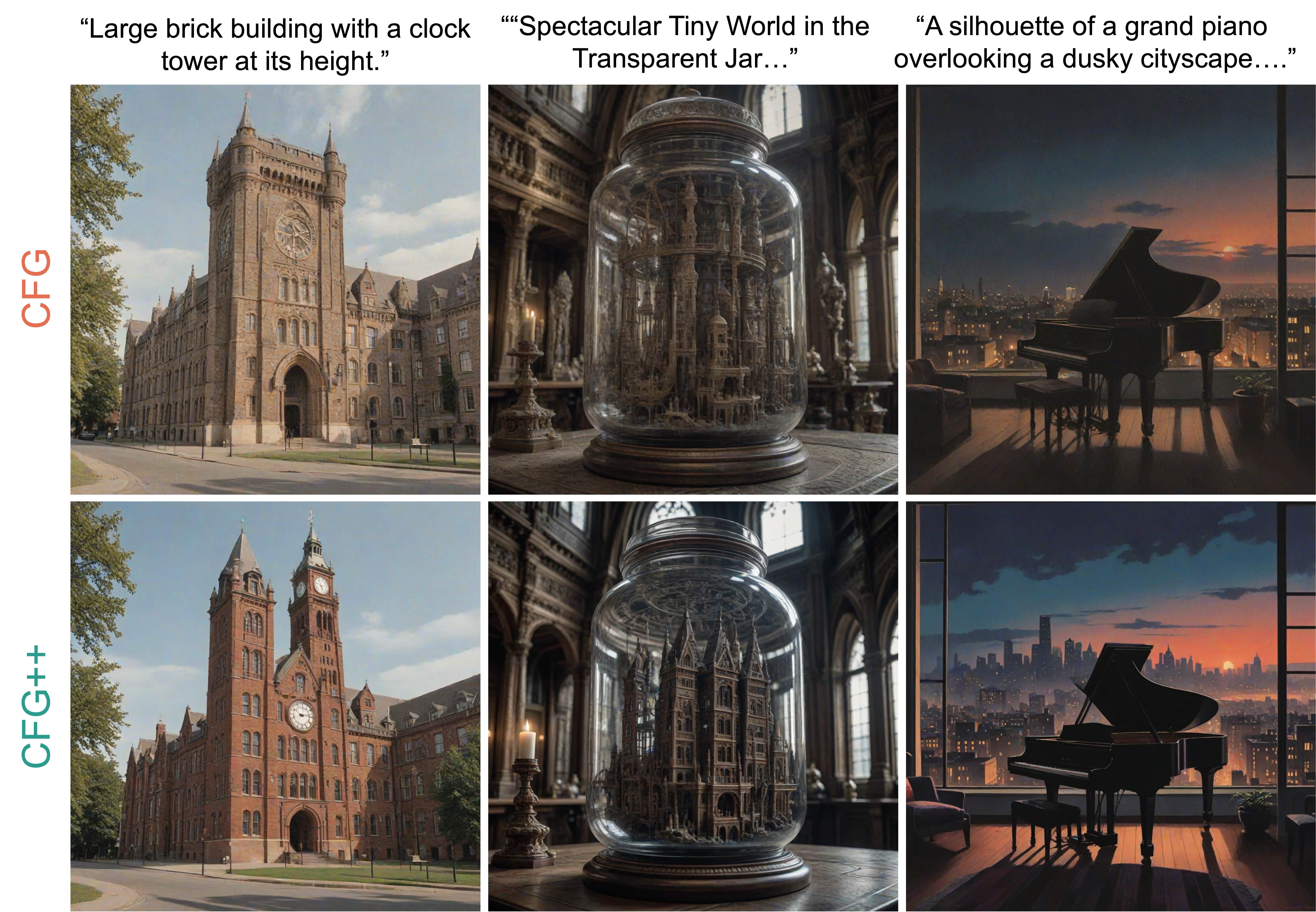}
    \caption{ T2I using SDXL-Lightning, 6 NFE, CFG vs CFG++. The overall image quality and sophistication have improved with CFG++.}
    \label{fig:t2i_sdxl_lightning_v2}
\end{figure}

\begin{figure}[!t]
    \centering
    \includegraphics[width=1.0\textwidth]{./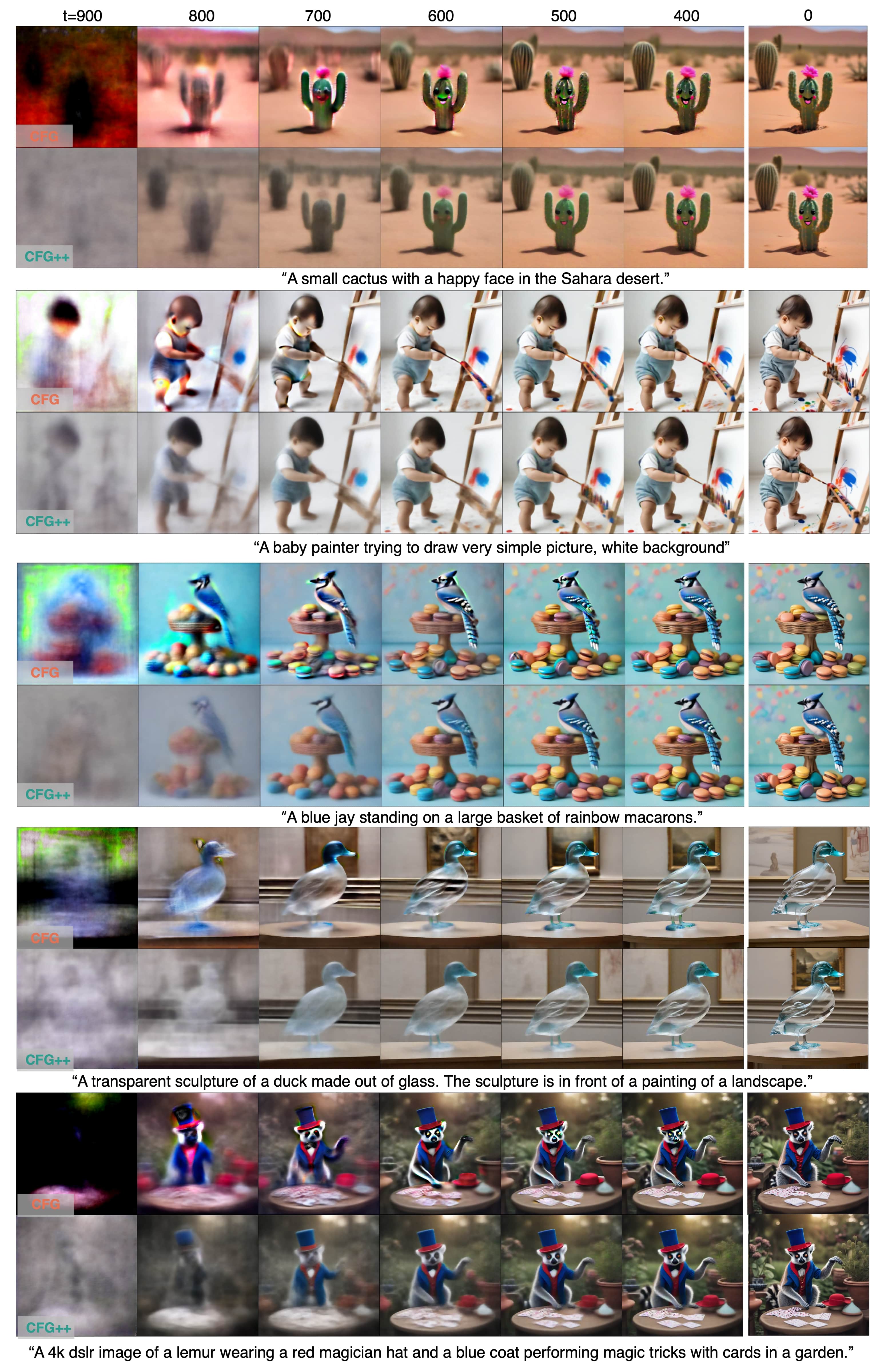}
    \caption{The discrete evolution of the posterior mean in CFG and CFG++. Denoised estimates in latent space are decoded into pixel space at each timestep using SDXL.}
    \label{fig:tweedie_progression}
\end{figure}

\begin{figure}[!t]
    \centering
    \includegraphics[width=1.0\textwidth]{./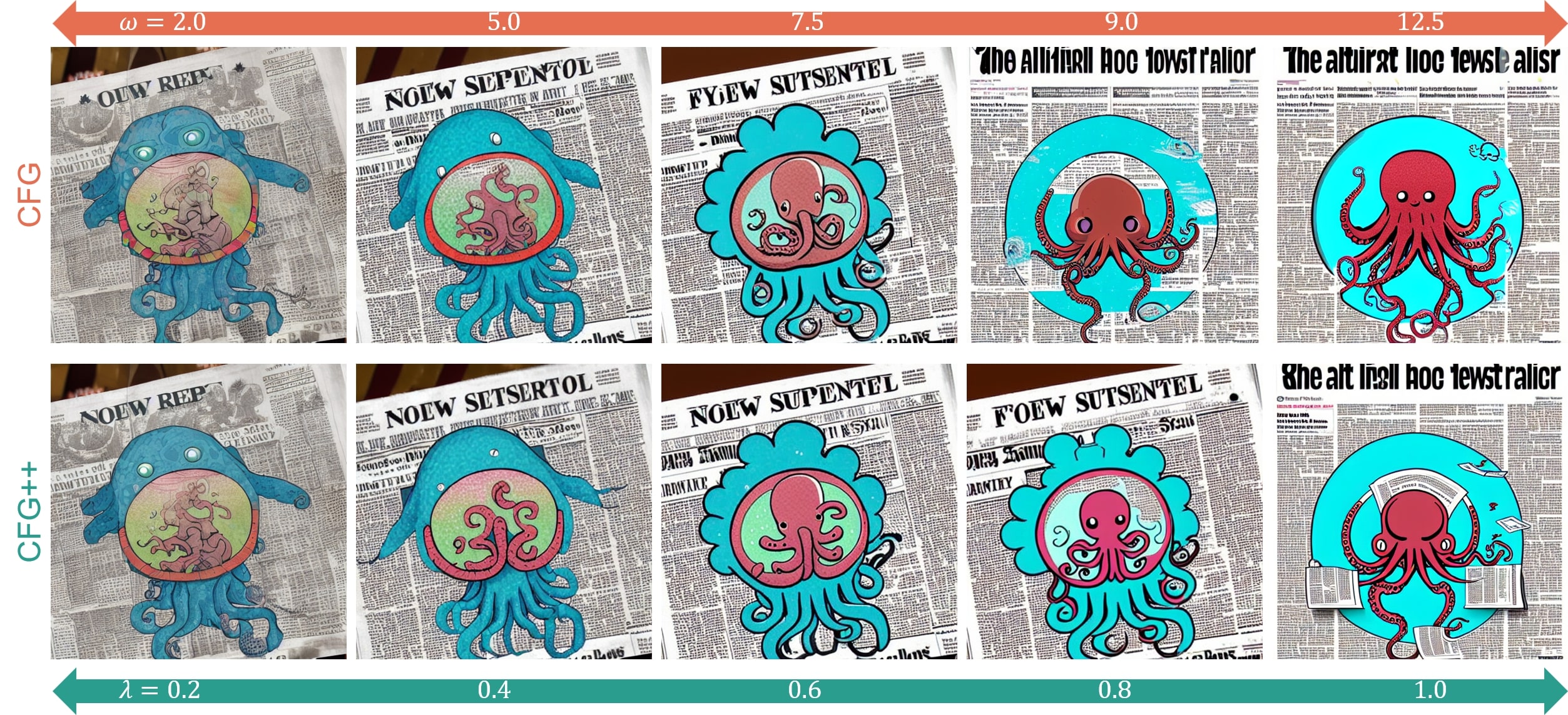}
    \caption{Interpolating behavior of T2I along various guidance scales using SD v1.5 with CFG and CFG++
}
    \label{fig:t2i_scale_interpolation}
\end{figure}

\begin{figure}[!t]
    \centering
    \includegraphics[width=\linewidth]{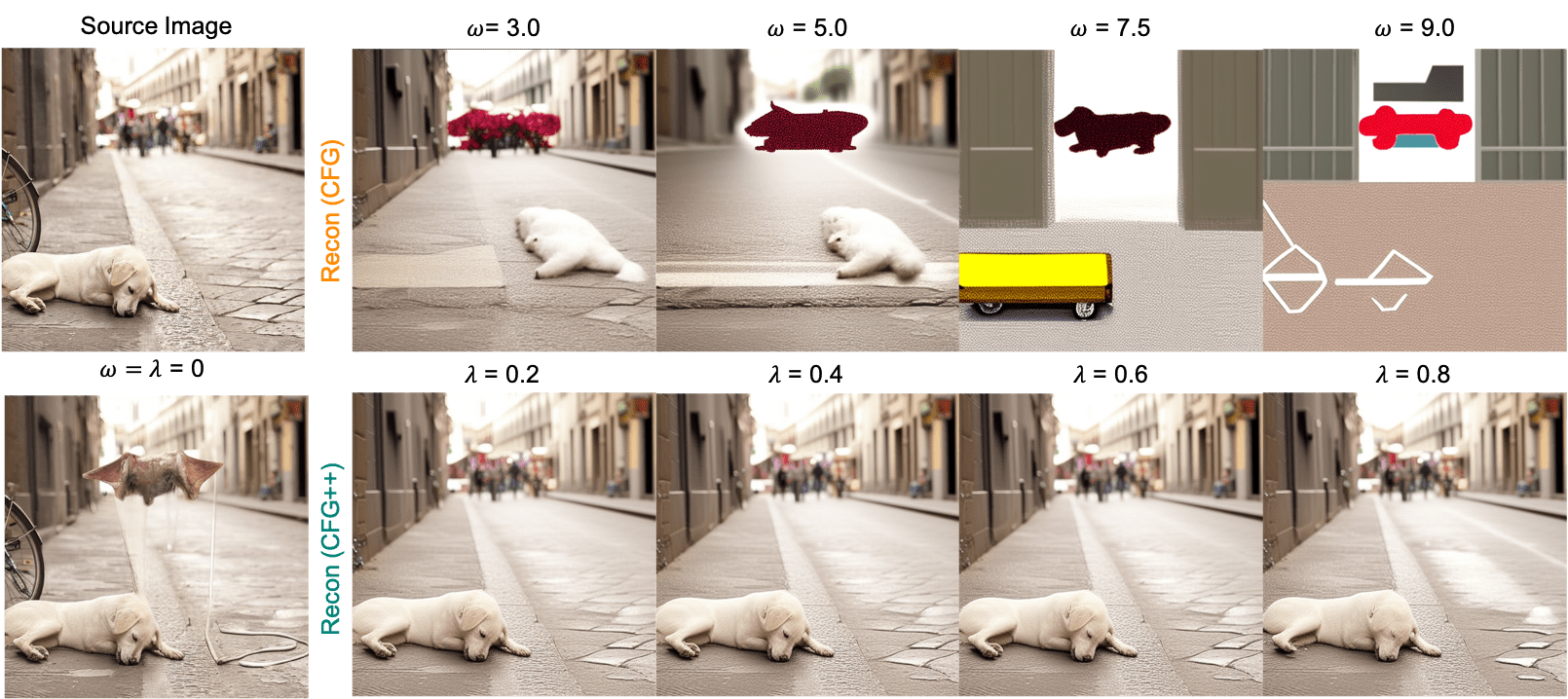}
    \caption{Real image inversion results at various CFG and CFG++ scales using SDv1.5. The image is from COCO dataset. We use the text prompt ``a white dog is sleeping on a street and a bicycle'' during the inversion.}
    \label{fig:inversion_scale}
\end{figure}

\subsection{Real Image Editing}
In Fig.~\ref{fig:real_edit1}-\ref{fig:real_edit3}, we provide qualitative comparison on real image editing via DDIM inversion with CFG and CFG++. For the DDIM inversion stage, we use \texttt{"a photography of [source concept]"} as conditioning prompt. For the sampling stage, we swap [source concept] to [target concept] and use it as conditioning prompt for generation. For example, in Fig.~\ref{fig:real_edit1}, we set [source concept] to "dog" and [target concept] to "cat".
For all experiments, we set the guidance scale as $\omega=9.0$ and $\lambda=0.8$ as described in the main paper.
The comparison demonstrates that CFG++ successfully edits the given image which was not possible by  CFG. This results also support our claim on reduced error during DDIM inversion by CFG++.

\begin{figure}[h]
    \centering
    \includegraphics[width=\linewidth]{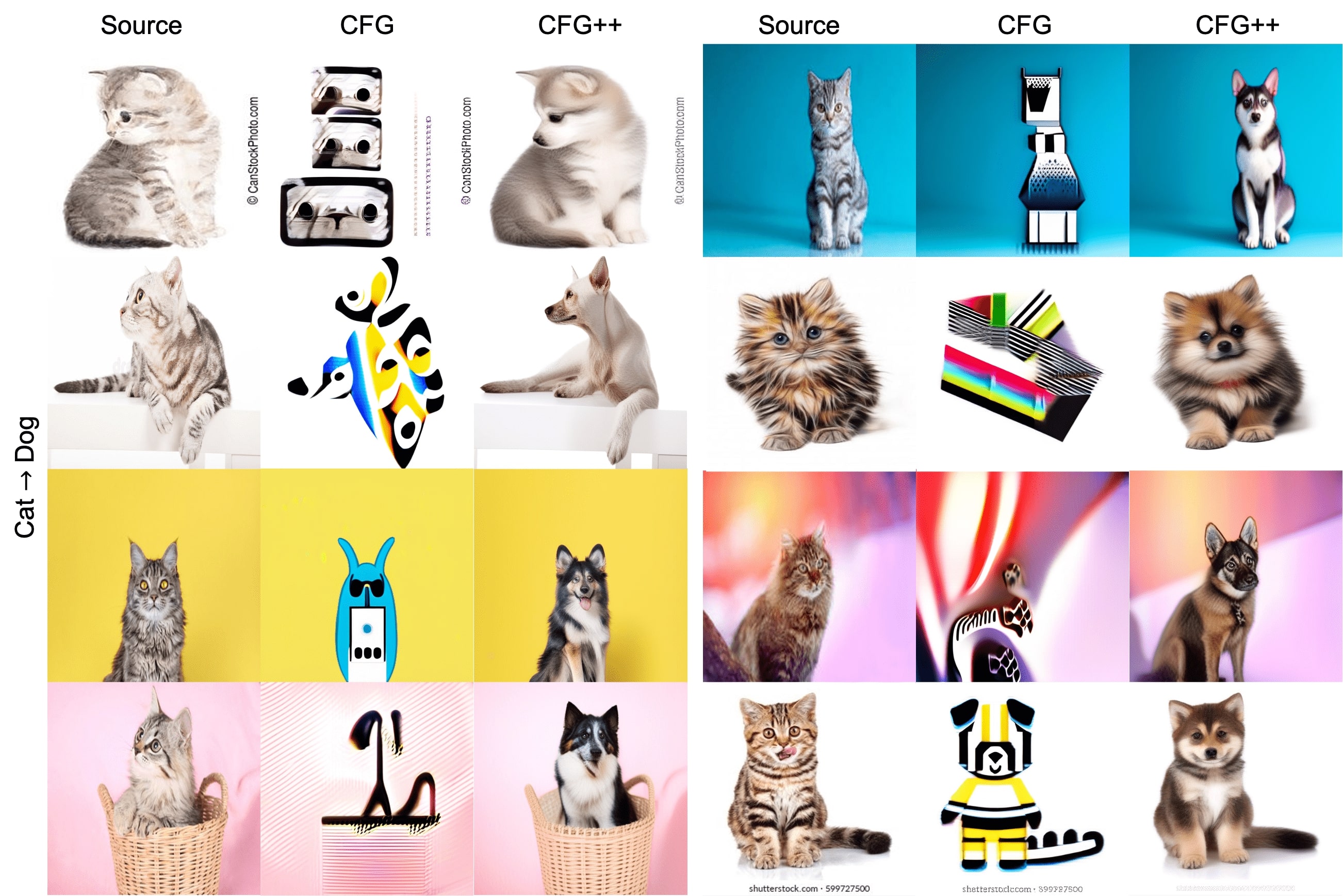}
    \caption{Comparison on real image editing via DDIM inversion under CFG and CFG++ using SDXL. Cat $\rightarrow$ Dog.}
    \label{fig:real_edit1}
\end{figure}

\begin{figure}[h]
    \centering
    \includegraphics[width=\linewidth]{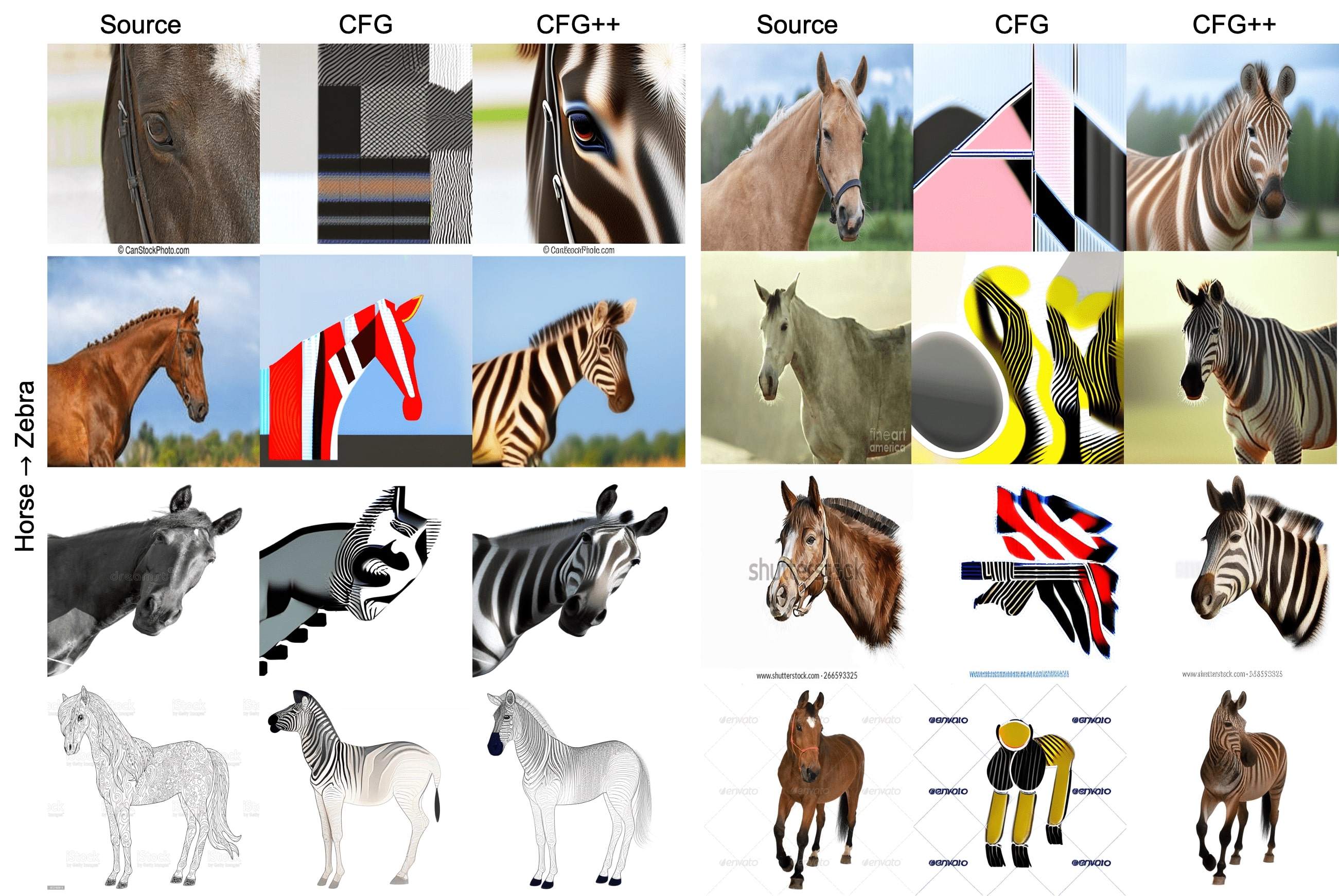}
    \caption{Comparison on real image editing via DDIM inversion under CFG and CFG++ using SDXL. Horse $\rightarrow$ Zebra.}
    \label{fig:real_edit2}
\end{figure}

\begin{figure}[h]
    \centering
    \includegraphics[width=\linewidth]{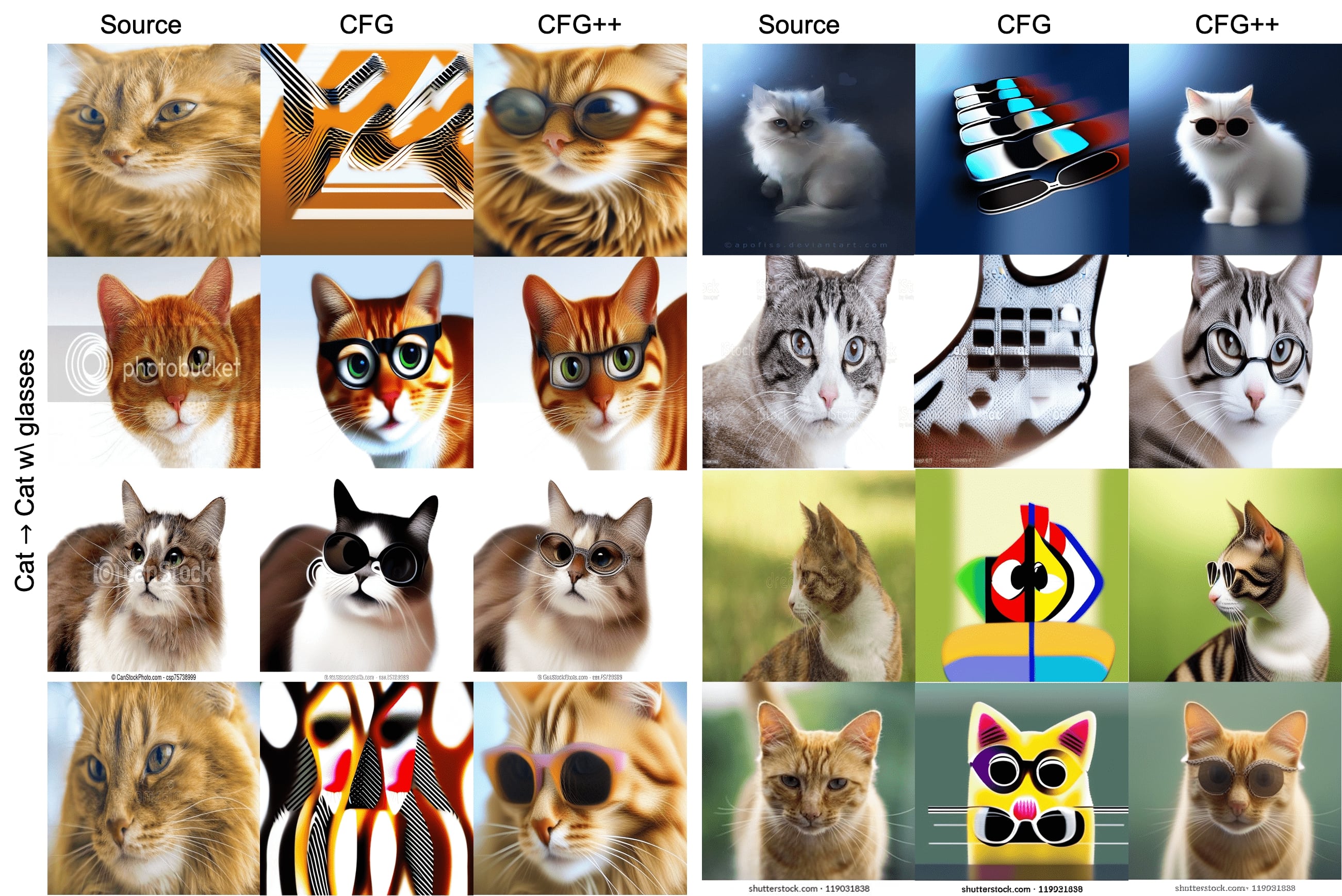}
    \caption{Comparison on real image editing via DDIM inversion under CFG and CFG++ using SDXL. Cat $\rightarrow$ Cat with glasses.}
    \label{fig:real_edit3}
\end{figure}

\subsection{Text-Conditioned Inverse Problems}

In Fig.~\ref{fig:ldis_sup_sr}-\ref{fig:ldis_sup_fip}, we display additional qualitative comparison for text-conditioned inverse problem solver with CFG and CFG++. Experiments are conducted with FFHQ (512x512) validation set and CFG++ consistently leads to better reconstruction of true solution for various tasks.

\begin{figure}[h]
    \centering
    \includegraphics[width=0.7\linewidth]{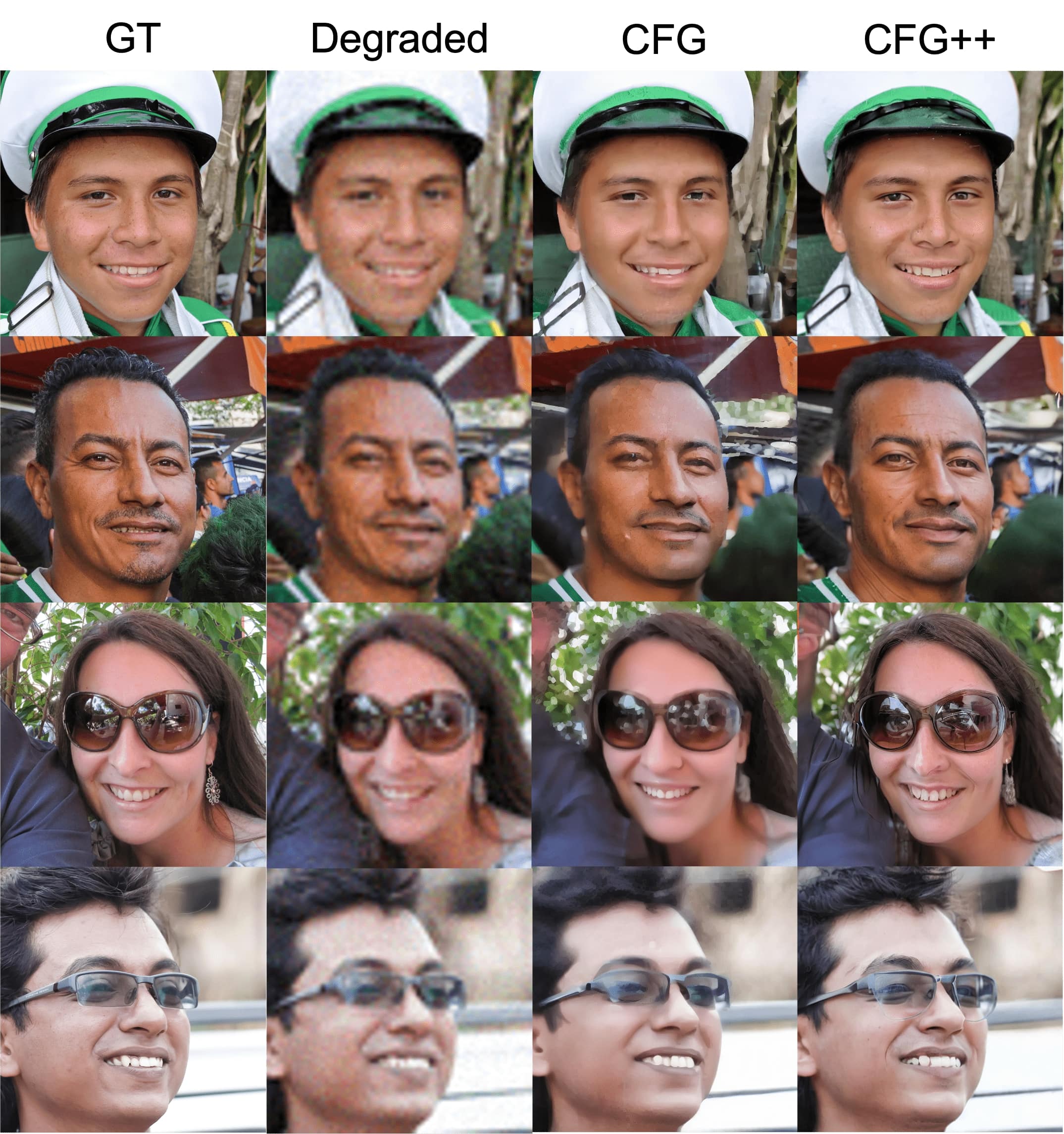}
    \caption{Results of PSLD using CFG and CFG++ on the FFHQ dataset at Super-Resolution (x8).}
    \label{fig:ldis_sup_sr}
\end{figure}

\begin{figure}[h]
    \centering
    \includegraphics[width=0.7\linewidth]{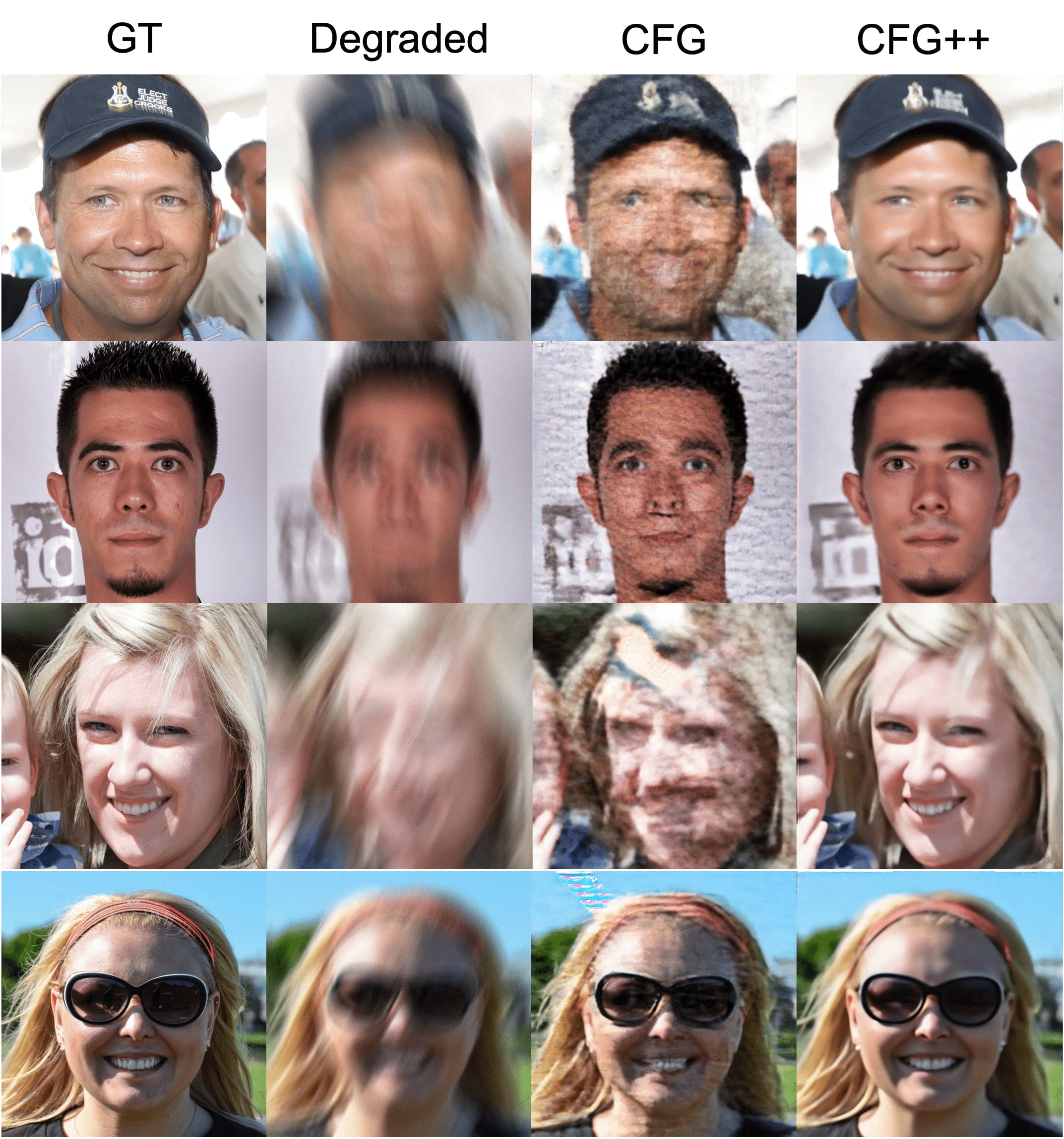}
    \caption{Results of PSLD using CFG and CFG++ on the FFHQ dataset at Motion Deblurring.}
    \label{fig:ldis_sup_mb}
\end{figure}

\begin{figure}[h]
    \centering
    \includegraphics[width=0.7\linewidth]{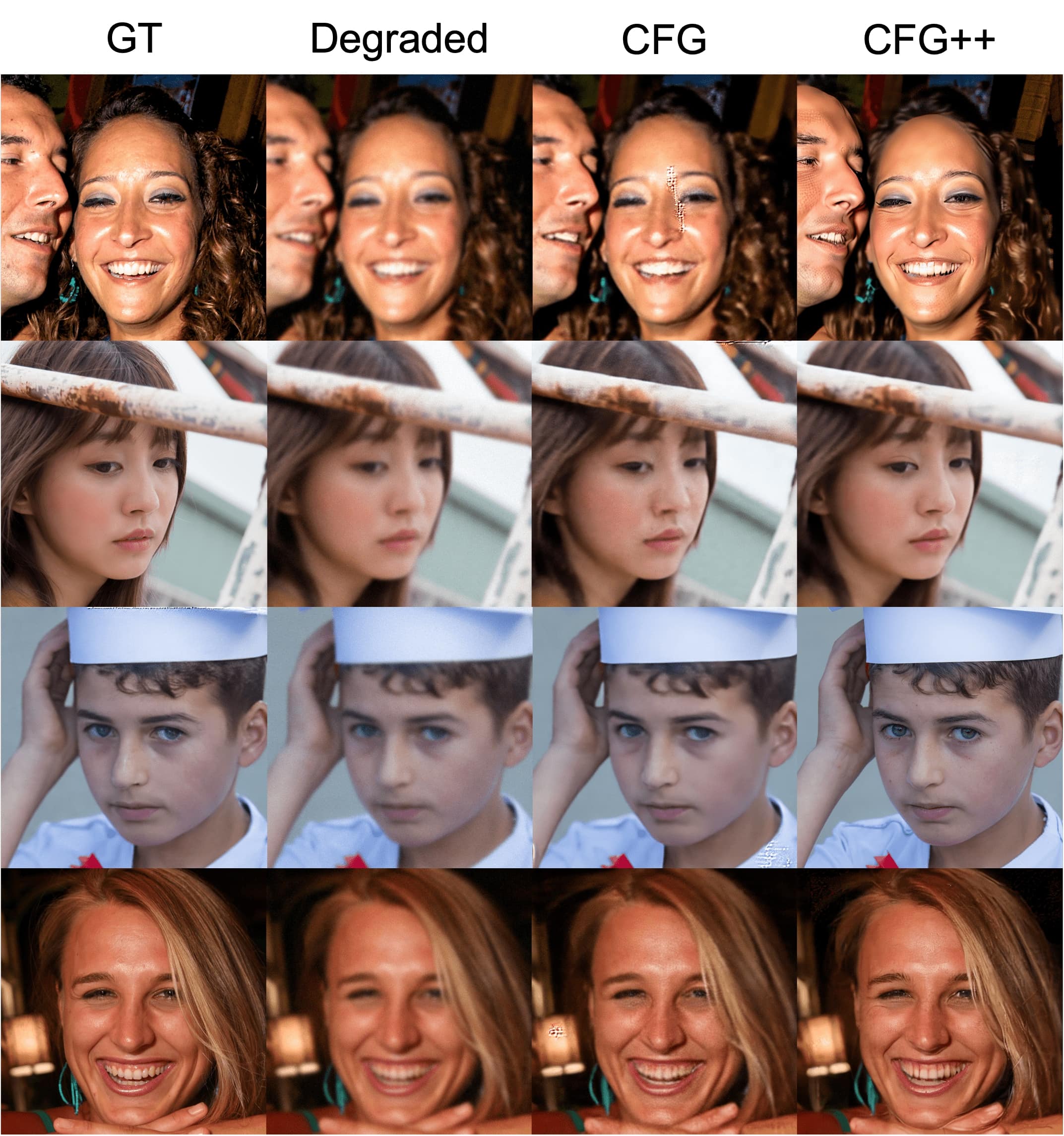}
    \caption{Results of PSLD using CFG and CFG++ on the FFHQ dataset at Gaussian Deblurring.}
    \label{fig:ldis_sup_gb}
\end{figure}

\begin{figure}[h]
    \centering
    \includegraphics[width=0.7\linewidth]{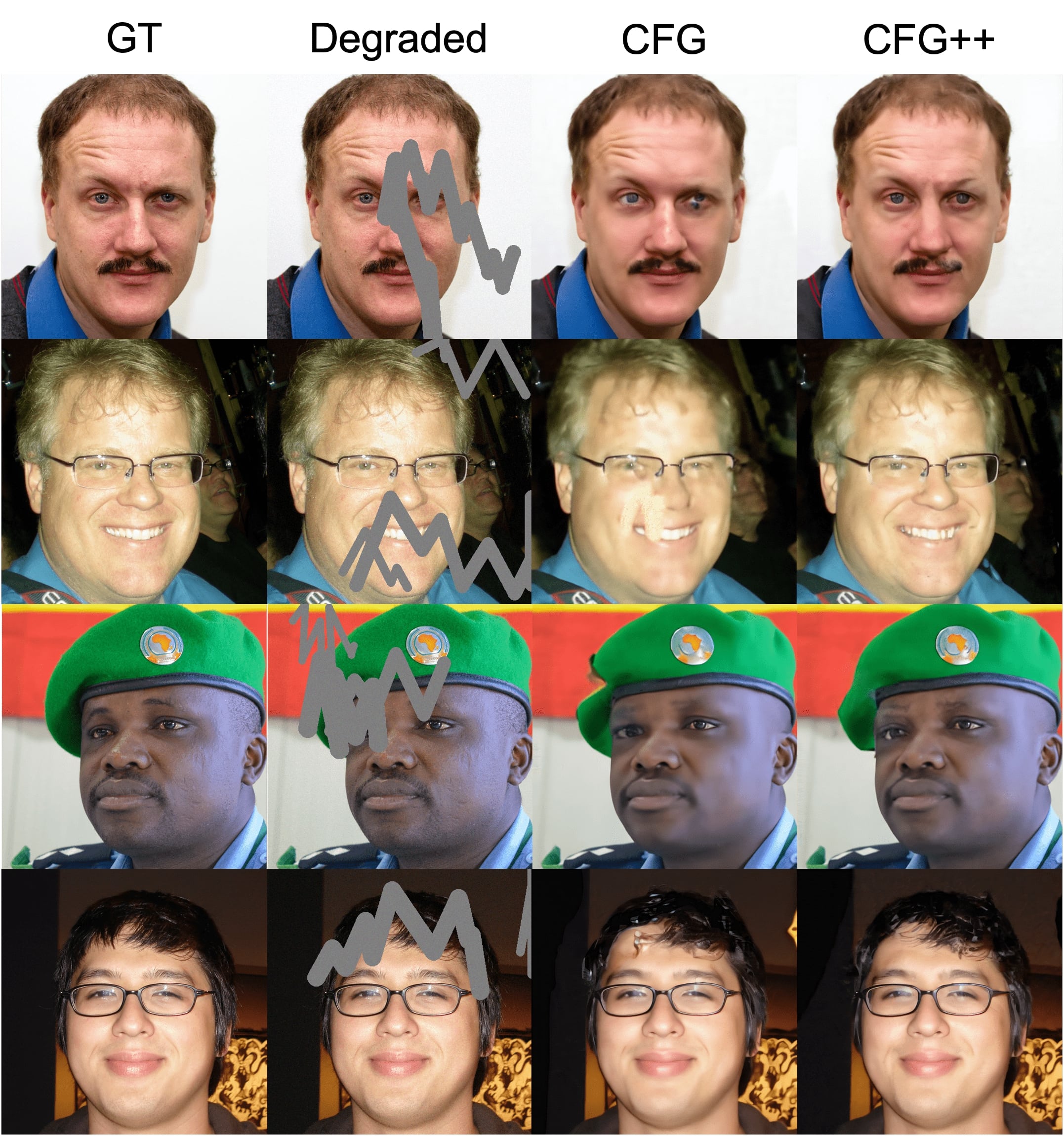}
    \caption{Results of PSLD using CFG and CFG++ on the FFHQ dataset at Inpainting.}
    \label{fig:ldis_sup_fip}
\end{figure}

\end{document}